\DeclareMathAlphabet{\pazocal}{OMS}{zplm}{m}{n} 
\renewcommand{\mathcal}[1]{\pazocal{#1}}
\newtheorem{theorem}{Theorem}
\newtheorem{lemma}{Lemma}
\definecolor{Gred}{RGB}{180, 30, 34}
\definecolor{Ggreen}{RGB}{60, 186, 84}
\definecolor{Gblue}{RGB}{40, 30, 255}
\definecolor{Gyellow}{RGB}{247, 178, 16}
\definecolor{ToCgreen}{RGB}{0, 128, 0}
\title{\textsc{ReGuidance}: A Simple Diffusion Wrapper for Boosting Sample Quality on Hard Inverse Problems}
\newcommand{\norm}[1]{\|#1\|}
\newcommand{\DPS}{v^{\sf DPS}}
\newcommand{\Id}{\mathrm{Id}}
\newcommand{\diag}{\mathrm{diag}}
\newcommand{\E}{\mathbb{E}}
\newcommand{\denoise}{\mu}
\newcommand{\sech}{\mathrm{sech}}
\newcommand{\dpsx}{x^{\mathsf{DPS}}}
\newcommand{\dpsv}{v^{\mathsf{DPS}}}
\newcommand{\dpsQ}{Q^{\mathsf{DPS}}}
\newcommand{\normal}{\mathcal N}
\newcommand{\dd}{\mathrm d}
\newcommand{\reguidance}{\textsc{ReGuidance}}
\newcommand{\erfi}{\mathrm{erfi}}
\author{
Aayush Karan\hspace{0.01em} \thanks{Email: \href{mailto:akaran1@g.harvard.edu}{akaran1@g.harvard.edu}}\\
\small Harvard SEAS \\ 
\and
Kulin Shah\hspace{0.01em} \thanks{Email: \href{mailto:kulin@cs.utexas.edu}{kulinshah@utexas.edu}} \\
\small UT Austin \\
\and
Sitan Chen \thanks{Email: \href{mailto:sitan@seas.harvard.edu}{sitan@seas.harvard.edu}} \\
\small Harvard SEAS
}
\date{June 12, 2025}
\begin{document}

\maketitle

\begin{abstract}
    There has been a flurry of activity around using pretrained diffusion models as informed data priors for solving inverse problems, and more generally around steering these models using reward models. 
Training-free methods like diffusion posterior sampling (DPS) and its many variants have offered flexible heuristic algorithms for these tasks, 
but when the reward is not informative enough, e.g., in hard inverse problems with low signal-to-noise ratio, these techniques veer off the data manifold, failing to produce realistic outputs.

In this work, we devise a simple wrapper, \reguidance, for boosting both the sample realism and reward achieved by these methods.
Given a candidate solution $\hat{x}$ produced by an algorithm of the user's choice, we propose inverting the solution by running the unconditional probability flow ODE in reverse starting from $\hat{x}$, and then using the resulting latent as an initialization for DPS. 

We evaluate our wrapper on hard inverse problems like large box in-painting and super-resolution with high upscaling. Whereas state-of-the-art baselines visibly fail, we find that applying our wrapper on top of these baselines significantly boosts sample quality and measurement consistency.

We complement these findings with theory proving that on certain multimodal data distributions, \reguidance \ simultaneously boosts the reward and brings the candidate solution closer to the data manifold. To our knowledge, this constitutes the first rigorous algorithmic guarantee for DPS.
\end{abstract}
\thispagestyle{empty}

\clearpage

\tableofcontents
\thispagestyle{empty}
\clearpage

\addtocounter{page}{-2}


\newpage

\section{Introduction}
\label{sec:intro}

Motivated by the flexibility and fidelity with which diffusion models can capture realistic data distributions \cite{ho2020denoising, dhariwal2021diffusion, song2021scorebased}, a large number of recent works have sought to leverage these models as rich data priors for solving complex downstream tasks like Bayesian inference problems \cite{baldassari2023conditional, venkatraman2024amortizing, chan2025inverse}, black box optimization \cite{pmlr-v202-krishnamoorthy23a, li2024diffusion}, medical imaging \cite{chung2022score, chung2022mr, dorjsembe2024conditional, hung2023med}, and molecular design~\cite{gruver2023protein,wohlwend2024boltz}. These tasks are all incarnations of the general problem of \emph{reward guidance}: given a pretrained model for data distribution $q$ and reward model $r$, design a procedure that generates samples $x$ which simultaneously are ``\textit{realistic}'', in that they have high likelihood under $q$, and achieve high \textit{reward} $r(x)$.

Despite significant strides in practice along this direction, our understanding of this task remains limited both mathematically and empirically. It is common to frame reward guidance as sampling from the tilted density $\tilde{q}(x)\propto q(x)\cdot e^{r(x)}$. However, it is very unclear to what extent methods in practice are actually accomplishing this, as they either rely on heuristic approximations that significantly bias the output away from sampling from $\tilde{q}$ \cite{chung2023diffusion,kawar2022denoising,zhang2024improving}, or they rely on stochastic optimal control, for which computational costs prevent training for long enough to actually approach $\tilde{q}$ \cite{denker2024deft, domingo-enrich2025adjoint}. Indeed, for certain simple choices of $q$ and $r$, prior works have even shown that sampling from the tilted density is \emph{computationally intractable}~\cite{gupta2024diffusion,bruna2024provable}.

\begin{figure}[htbp]
  \centering
  
    \includegraphics[width=\linewidth]{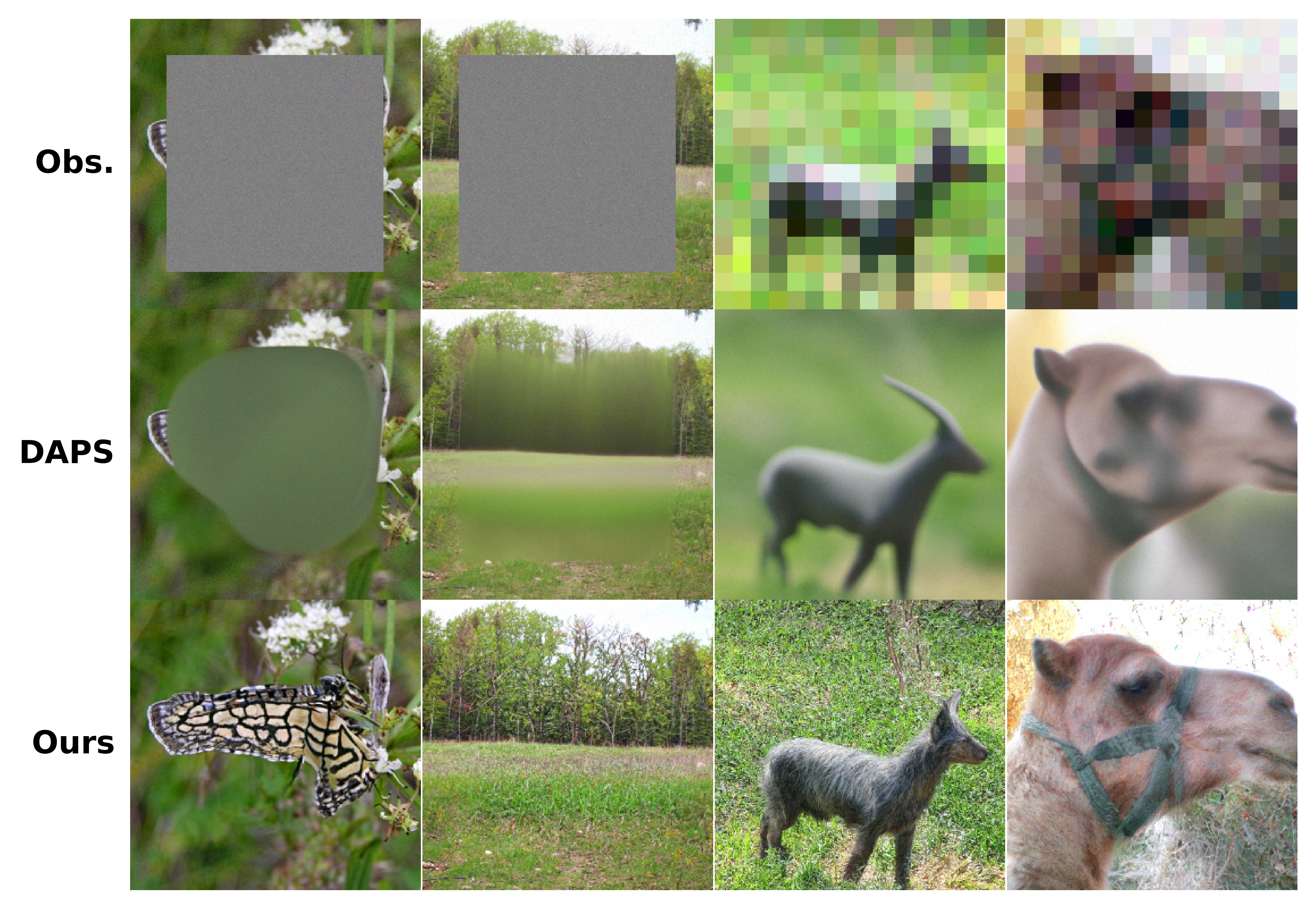}
    \captionsetup{font=small}  \caption{Comparing restoration performance on hard box-inpainting (cols. 1,2) and superresolution (cols. 3,4) tasks. The first row is the measurement, the second gives solutions generated by the state-of-the-art DAPS method~\cite{zhang2024improving}, and the third gives solutions obtained by applying our method \textsc{ReGuidance} to DAPS.}
  \label{fig:main}
\end{figure}

\paragraph{Hard reward models.} While this seems to run counter to the impressive capabilities of guidance methods in practice, it is not difficult to construct natural reward models under which existing methods fail. In this work, we focus on rewards arising from \emph{inverse problems} with \emph{low signal-to-noise-ratio (SNR)}. A canonical example of such a task is \emph{large box-inpainting}, where
an $m \times n$ image is observed with a random region of size $\alpha m \times \alpha n$ masked out, for some large masking fraction $\alpha$.
The reward function $r(x)$ in this case is given by the negative squared distance between the generated sample $x$ and the observed masked image, restricted to the unmasked pixels. 

State-of-the-art methods for reward guidance solve this problem remarkably well for data distributions like face datasets, which have large amounts of redundancy and low intrinsic dimension. But as soon as one moves beyond these settings, these methods break down even for moderate $\alpha$ (e.g. $\alpha > 1/2$). Indeed, as shown in Figure~\ref{fig:main} for hard inpainting and superresolution,  these methods not only fail to sample from the appropriate tilted density (i.e., the posterior distribution) in such regimes, but their generated outputs visibly suffer from poor realism. On the other hand, it is not so hard to achieve high reward: the methods pictured all generate outputs that are consistent with the given inpainted or downsampled measurements. 

Altogether this basic example suggests that while it is now par for the course to be able to generate highly realistic and accurate reconstructions for inverse problems with high SNR, current methods are just not there yet when it comes to these more challenging reward models.

\subsection{Our contributions} 

Our main results are threefold:

\vspace{1em}\noindent\textbf{1. A new and simple boosting method:} We propose \textbf{\textsc{ReGuidance}}, a simple algorithm to uniformly boost the sample quality of the existing diffusion-based inverse problem solvers. It takes as input a candidate reconstruction $x$, produced by an inverse problem solver of the user's choice, and generates a new reconstruction.

The algorithm is simple, operating in two modular steps (see Algorithm~\ref{alg:reguidance} for the pseudocode). {\bf(I)} It runs the deterministic sampler given by the pretrained diffusion model for the base density $q$ \emph{in reverse} to extract the latent $x^*$ associated to $x$. {\bf(II)} Starting from $x^*$, it runs the \emph{Diffusion Posterior Sampling (DPS)} algorithm of~\cite{chung2023diffusion} to produce the new reconstruction $x^{\sf DPS}$. We provide a full description of the technical details in Section~\ref{prelims}.

We show that in many settings, both empirical and theoretical, the new reconstruction achieves higher realism while still achieving high reward.

\vspace{1em}\noindent\textbf{2. Empirical performance:} In traditionally difficult inverse tasks for image restoration, we show that \textsc{ReGuidance} can significantly boost the measurement consistency and sample quality of candidate reconstructions. For inverse problems like \emph{box-inpainting}, we observe a consistent and significant improvement in both reward and realism metrics when applying \textsc{ReGuidance} to various baselines. Qualitatively, we also observe diverse but realistic reconstructions that are meaningfully distinct from the original sample being measured (details in Section~\ref{sec:experiments}). 

Our work investigates a largely unexplored design axis, i.e. the choice of latent initialization affects realism, and shows that selecting a good latent can lead to substantial performance improvements, providing solutions to inverse tasks that  prior sampling methods tend to collapse on. 

\vspace{1em}\noindent\textbf{3. Theoretical justification:} For two models of Gaussian mixture data, we rigorously prove that \textsc{ReGuidance} improves reward value and realism. Formally, in these settings we are able to show that the algorithm approximately brings $x$ onto the manifold of points which achieve maximal reward, and that even if $x$ already lies on this manifold, under \textsc{ReGuidance} it will get contracted even closer to one of the modes of the data. These results are given in Sections~\ref{sec:model} and \ref{sec:realism}.

To our knowledge, these are the first theoretical guarantees for DPS in any model of data. This is enabled by two shifts in perspective. First, instead of focusing on sampling from the tilted density $\tilde{q}$, which DPS provably fails to do even on very simple models of data (see Appendix~\ref{app:fail}), we focus on the goal of producing a sample which has high likelihood under the base density (``realism'') and high measurement consistency (``reward''). Second, because we are quantifying how much our algorithm boosts $x$ along these axes, our bounds are inherently tied to the choice of latent $x^*$ that DPS is initialized at in Step (II) of \textsc{ReGuidance}. Indeed, one corollary of our results is that the performance of DPS depends heavily on the quality of the initial latent.

\subsection{Related work}

\paragraph{Solving inverse problems with pretrained generative models.} Pretrained generative models such as Generative Adversarial Networks (GANs) and diffusion models have been extensively used data priors for solving inverse problems \cite{pmlr-v70-bora17a, inverseproblemsurvey, daras2024survey}. GAN-based methods typically search for a latent variable that generates an image that aligns with the measurement \cite{pmlr-v70-bora17a, mardani2018deep, wu2019deep}. In recent years, the diffusion models have outperformed GANs at modeling the image prior and generating novel images, leading to increased interest in using diffusion models to solve inverse problems \cite{lugmayr2022repaint, kawar2022denoising, saharia2023Image, chung2023diffusion, wang2022zero, song2023pseudoinverseguided, zhu2023denoising, zhang2024improving, li2024decoupleddataconsistencydiffusion, moufad2025variational, domingo-enrich2025adjoint,chen2025solving}. Many \emph{training-free} methods such as DDRM \cite{kawar2022denoising}, DPS \cite{chung2023diffusion}, 
$\Pi$GDM \cite{song2023pseudoinverseguided}, 
DAPS \cite{zhang2024improving} and \emph{training-based} methods \cite{denker2024deft, domingo-enrich2025adjoint} have been proposed to solve inverse problems with diffusion models. Training-free methods modify the reverse process of diffusion models with a hand-designed guidance term that pushes the trajectory towards measurement consistency.
Our work provides a simple method to boost the performance of such methods for inverse problems where the measurement is highly lossy.

\paragraph{Reward guidance for generative models.} A closely related but more general line of work concerns steering the outputs of pretrained generative models to generate samples from a tilted distribution where the tilt is given by a pre-specified reward model \cite{black2024training, fan2023reinforcement, wallace2023end, clark2024directly, domingo-enrich2025adjoint}. When the reward is given by measurement consistency, the problem reduces to posterior sampling for inverse problems. While our method in principle can also be applied to this more general setting, in this work we focus primarily on image restoration tasks like inpainting and downsampling.

\paragraph{Scaling inference-time compute for diffusion models.} Recent works have shown that increasing inference-time compute for diffusion models can improve performance across various generation tasks \cite{dou2024diffusion, wu2024practicalasymptoticallyexactconditional, li2024derivativefreeguidancecontinuousdiscrete, uehara2025inferencetimealignmentdiffusionmodels, singhal2025generalframeworkinferencetimescaling, ma2025inference}. These methods typically start sampling multiple diffusion generation trajectories called particles, and reweight and filter the particles during the generation to optimize for the reward. Reweighting and filtering of the particles is performed using sequential Monte Carlo guidance \cite{dou2024diffusion, wu2024practicalasymptoticallyexactconditional} or value-based importance sampling \cite{li2024derivativefreeguidancecontinuousdiscrete}. Our work provides a different way of scaling inference-time compute by selecting a good latent noise vector by running the reverse unconditional probability flow ODE. 

\paragraph{Optimizing for latents in diffusion models.} The benefit of choosing the correct latent has been considered in prior works in the broader literature of diffusion models \cite{mokady2022nulltextinversioneditingreal, huberman2024edit, qi2024not} for tasks like image editing, but it is not clear how these methods can be utilized for image restoration problems. Furthermore, our work gives, to our knowledge, the first theoretical characterization of how the output of diffusion posterior sampling depends on the specific choice of latent.

\paragraph{Theory for diffusion-based posterior sampling.} Posterior sampling using diffusion models is known to be computationally intractable in the worst case \cite{gupta2024diffusion, bruna2024provable}. However, several recent works \cite{xu2024provably, bruna2024provable, montanari2024posteriorsamplinghighdimension, karan2024unrolled} proposed algorithms with provable theoretical guarantees for posterior sampling under relaxed assumptions on the data distribution and/or measurement matrix. These works are mainly for measurements that are either well-conditioned or have rank which is a constant fraction of the ambient dimension; in contrast, we work with highly compressive measurements like inpainting and downsampling. The exception is the algorithm of~\cite{xu2024provably} which comes with rigorous guarantees for general inverse problems; their results however are asymptotic in nature, and they evaluated on superresolution on ImageNet $256\times256$ but only at $4\times$ downsampling ratio, compared to $8\times, 16\times$ in our work.

\section{The \reguidance \ algorithm}
\label{prelims}

Here we give a complete description of our algorithm. In Section~\ref{sec:technicalprelims}, we recall the existing mathematical setup for diffusion models and diffusion posterior sampling. In Section~\ref{sec:ouralgo} we give the pseudocode for \reguidance.

\subsection{Technical preliminaries}
\label{sec:technicalprelims}

\subsubsection{Diffusion model basics}

In the context of \emph{unconditional} generation, diffusion models provide the following framework for approximately sampling from a target measure $q$ over $\mathbb{R}^d$ given access to samples from $q$. In this work we work with the most common choice of forward process, the \emph{Ornstein-Uhlenbeck process}, which is given by the SDE $\mathrm{d}x_t = -x_t \, \mathrm{d}t + \sqrt{2}\mathrm{d}B_t$, where $(B_t)_{t\ge 0}$ denotes a standard Brownian motion in $\mathbb{R}^d$ and $x_0 \sim q$. Define $q_t \triangleq \mathrm{law}(x_t)$. Given a large terminal time $T\ge 0$, one choice of SDE which provides a time-reversal for this stochastic process over times $t\in[0,T]$ is the \emph{reverse SDE}
\begin{equation}
    \mathrm{d}x^\leftarrow_t = (x^\leftarrow_t + 2\nabla \ln q_{T-t}(x^\leftarrow_t))\,\mathrm{d}t + \sqrt{2}\mathrm{d}B_t\,,
\end{equation}
where now $(B_t)_{0\le t\le T}$ denotes the reversed Brownian motion, and the \emph{score functions} $(\nabla \ln q_t)_t$ are estimated from data. We will touch upon issues of estimation error in Sections~\ref{sec:model} and \ref{sec:realism}, but for now we will assume these are exactly known to us.

The reverse SDE has the property that if $x^\leftarrow_T \sim \mathcal{N}(0,\mathrm{Id})$, then $\mathrm{law}(x^\leftarrow_t) \approx q_{T-t}$ for all $0\le t\le T$ provided $T$ is sufficiently large. In particular, if one can simulate the reverse SDE up to $t = T$, then the resulting iterate is distributed as a sample from the target measure. 

Another process which also yields a time-reversal of the forward process is the \emph{probability flow ODE}
\begin{equation}
    \mathrm{d}x^\leftarrow_t = (x^\leftarrow_t + \nabla \ln q_{T-t}(x^\leftarrow_t))\,\mathrm{d}t\,.
\end{equation}
This is the process that is used in denoising diffusion implicit models (DDIMs) and, equivalently, flow matching models with Gaussian source distribution.

\subsubsection{Posterior sampling with diffusion models}

In this work we are interested in the general question of \emph{steering} a diffusion model according to a given reward model. In this setting, one is given access to the scores $\nabla \ln q_t$ for a base measure $q$, corresponding to a \emph{pretrained} diffusion model, as well as access to a \emph{reward model} $r: \mathbb{R}^d\to \mathbb{R}$, and the goal is to design a sampler for the tilted measure $\tilde{q}(x) \propto q(x) \cdot e^{r(x)}$. Various training-free methods \cite{kawar2022denoising, chung2023diffusion, zhang2024improving} have been proposed for sampling from the reward-tilted posterior, and our work offers a computationally light training-free boosting method.

A well-studied family of reward models is those arising from \emph{inverse problems}. Suppose that a signal $x$ is sampled from the base measure $q$, and we observe $y = f(x) + g$, where $f: \mathbb{R}^d\to\mathbb{R}^m$ and $g\sim\mathcal{N}(0,\sigma^2\Id)$. Then conditioned on observing $y$, the posterior measure on $x$ is given by
\begin{equation}
    \tilde{q}(x) \propto q(x) \cdot e^{r(x)}\,, \qquad \qquad r(x) = -\frac{1}{2\sigma^2}\norm{y - f(x)}^2\,.
\end{equation}
We will often refer to $\norm{y - f(x)}^2$ as the \emph{reconstruction loss}.

One of the most popular training-free approaches for trying to sample from this posterior is \emph{diffusion posterior sampling (DPS)} \cite{chung2023diffusion}. First, one notes that
\begin{equation}
    \nabla \ln \tilde{q}_t(x) = \nabla \ln q_t(x) + \nabla \ln \E_{x_0}[e^{r(x_0)} \mid x_t = x]\,,
\end{equation}
where the conditional expectation is with respect to $x_0$ conditioned on $x_t \sim \mathcal{N}(e^{-t}x_0,(1-e^{-2t})\Id)$ being equal to $x$. Unfortunately, this vector field is not readily available as it requires gradients of the posterior density on $x_0$, which can be very complicated. DPS offers one popular heuristic: replace the expectation with the point mass at $\denoise_t(x) \triangleq \mathbb{E}[x_0\mid x_t = x]$. This results in the approximation
\begin{equation}
    \nabla \ln \tilde{q}_t(x) \stackrel{?}{\approx} \nabla \ln q_t(x) + \DPS_t(x)\, \qquad \qquad \DPS_t(x) \triangleq \nabla_x r(\E[x_0\mid x_t = x])\,.
\end{equation}
One can then try sampling from $\tilde{q}$ by running either the ODE
\begin{equation}
    \mathrm{d}x^{\sf DPS}_t = (x^{\sf DPS}_t + \nabla \ln q_{T-t}(x^{\sf DPS}_t) + \DPS_{T-t}(x^{\sf DPS}_t))\mathrm{d}t \label{eq:DPS_ODE} \tag{DPS-ODE}
\end{equation}
or the analogous SDE, starting from $x^{\sf DPS}_0 \sim \mathcal{N}(0,\Id)$.

In this work, we will focus on \emph{linear} inverse problems for concreteness, in which case $f(x) = A x$ for $A \in \mathbb{R}^{m\times d}$. For linear inverse problems, we have
\begin{equation}
    \DPS_t(x) = \nabla_x r(\E[x_0\mid x_t = x]) = \frac{1}{\sigma^2} \nabla\denoise_t(x) A^\top (y - A\denoise_t(x))\,,
\end{equation}
where $\nabla\mu_t \in \mathbb{R}^{d\times d}$ denotes the Jacobian of the denoiser.

Unfortunately, it is well-known that DPS incurs significant bias relative to the true reverse process for $\tilde{q}$, even for very simple special cases like $q = \mathcal{N}(0,\Id)$ and $A = \mathrm{Id}$ (see Appendix~\ref{app:fail}). Indeed, to our knowledge there have been no works providing a well-defined, theoretically rigorous guarantee for what DPS is actually accomplishing, despite its surprising effectiveness in practice. Nevertheless, in this work we show that by starting this process with the appropriate initialization for $x^{\sf DPS}_0$, we can precisely pin down where Eq.~\eqref{eq:DPS_ODE} ends up after time $T$.

\subsection{Our algorithm}
\label{sec:ouralgo}

We now formally describe our method. Suppose we get as input initial reconstruction $x$, generated by an algorithm of the user's choice and intended to be an approximate sample from the support of $q$ that achieves some decent level of reward $r(x)$. Our algorithm consists of two simple steps, given in Algorithm~\ref{alg:reguidance} below. In the pseudocode below, when we refer to ``running'' a given ODE, in practice we mean that one should use a suitable numerical solver, as we do in our experiments. In our theoretical results, we assume exact simulation, as discretization error in the context of diffusion models is well-understood~\cite{chen2023sampling} and orthogonal to the thrust of this paper.

\begin{algorithm2e}[H]
\DontPrintSemicolon
\caption{\textsc{ReGuidance}($x, r$)}
\label{alg:reguidance}
\SetKwInOut{Input}{Input}
\SetKwInOut{Output}{Output}
\SetKwInOut{Params}{Hyperparams}

\Input{Initial reconstruction $x \in \mathbb{R}^d$ and a reward model $r : \mathbb{R}^{d} \to \mathbb{R}$ \\
\tcc{We set {$r(x) = \norm{y - Ax}^2$} for inverse tasks}}
\Params{Guidance strength $\rho$, time horizon $T$ for ODEs}
\Output{Improved reconstruction $\hat{x}$}

\vspace{1mm}
\textbf{Extract latent:} Run the unconditional probability flow ODE \emph{in reverse} from the initial reconstruction $x$ to obtain latent $x^*_T$:
\begin{equation*}
    \mathrm{d}x^*_t = -(x^*_t + \nabla \ln q_t(x^*_t))\,\mathrm{d}t,\qquad x^*_0 = x
\end{equation*}

\vspace{1mm}
\textbf{Run DPS from latent:} Run the DPS-ODE for time $T$ starting from $x^*_T$:
\begin{equation*}
    \mathrm{d}x^{\mathsf{DPS}}_t = \left(x^{\mathsf{DPS}}_t + \nabla \ln q_{T - t}(x^{\mathsf{DPS}}_t) + \rho \nabla_x r(\mu_{T-t}(x^{\sf DPS}_t))\right)\,\mathrm{d}t, \ \ \
    x^{\mathsf{DPS}}_0 = x^*_T
\end{equation*}
where $\denoise_{T - t}(x) \triangleq \mathbb{E}[x_0 \mid x_{T - t} = x]$.

\vspace{1mm}
\Return $\hat{x} = x^{\mathsf{DPS}}_T$.
\end{algorithm2e}

As we will see in our analysis (see Theorem~\ref{thm:sde}) and in experiments (see Section~\ref{sec:qualitative}), it is crucial that the second step in \textsc{ReGuidance} uses the \emph{ODE} formulation of DPS rather than the SDE. And as the next two sections will make clear, it is also essential that we initialize DPS at the latent $x^*_T$ rather than simply at some random latent as in the standard implementation of DPS.

\section{Experiments on image data}
\label{sec:experiments}


\subsection{Setup}

\paragraph{Datasets and models.} We primarily focus on the ImageNet $256 \times 256$ dataset\footnote{We choose ImageNet because unlike for other datasets in the literature (e.g., FFHQ), natural benchmarks for solving inverse problems with low SNR have yet to be saturated for this dataset by state-of-the-art methods.}
\cite{deng2009imagenet} in the main body, and we defer additional experiments, including those conducted on the CIFAR-10 dataset~\cite{krizhevsky2009learning}, to Appendix~\ref{app:further_experiments}. We use the pretrained unconditional $256 \times 256$ diffusion model from \cite{dhariwal2021diffusion} as our base model.

\paragraph{Inverse problems.} We focus on inverse problems that have a well-defined axis of ``signal-to-noise ratio" (SNR), allowing us to systematically vary the extent of information loss induced by the measurement process. In particular, we consider two tasks: \textit{box-inpainting} and \textit{super-resolution}. In box-inpainting, the SNR is adjusted by varying the size of the missing region; we consider two settings: \textit{small inpainting}, where a random $128 \times 128$ region is removed from a $256 \times 256$ image, and \textit{large inpainting}, where a random $191 \times 191$ region is removed. For super-resolution, we vary the SNR by changing the downsampling factor. \textit{Small super-resolution} corresponds to $8\times$ resolution reduction, and \textit{large super-resolution} corresponds to $16\times$ resolution reduction. In all cases, we further corrupt the measurements by adding white Gaussian noise with standard deviation $0.05$.

\paragraph{Baselines.} To show the effectiveness of \textsc{ReGuidance}, we use three posterior sampling methods: DDRM \cite{kawar2022denoising}, DPS \cite{chung2023diffusion}, and DAPS \cite{zhang2024improving}. For each baseline, we use a fixed evaluation set of 100 generated samples for each inverse task and apply \reguidance. To implement \textsc{ReGuidance}, we use the DDIM sampler with DPS \cite{chung2023diffusion}, setting the noise parameter $\eta$ to 0.0. As a final (idealized) baseline, we apply \textsc{ReGuidance} on the \emph{ground truth} images. We run inference on a single NVIDIA A100-SXM4-40GB GPU, with each run requiring at most 6 GPU hours.

\paragraph{Metrics.} We evaluate performance along the two axes of \textit{measurement consistency} (reward) and \textit{sample quality} (realism). For measurement consistency, we use the LPIPS~\cite{zhang2018unreasonable} score of the generated sample relative to the ground truth image. For sample quality, prior works use FID~\cite{heusel2017gans} as a quality metric, but we note that the FID score has poor sample quality and immensely high variance for validation sets on the order of $100$ samples. For example, DAPS uses an evaluation set of $100$ images, but the FID score on large inpainting tasks almost halves when scaling up to $1000$ images. Instead, we use the alternate well-regarded and sample efficient CMMD score~\cite{jayasumana2024rethinking}, which exhibits very little variance on the typical evaluation set sizes while tracking, if not outperforming, the correlation between FID and human preference.

\subsection{Inpainting}

We present our main results for inpainting in Table \ref{tab:inpainting_metrics}. \textsc{ReGuidance} demonstrates large improvements in measurement consistency and sample quality across almost every baseline and task difficulty. As expected, \textsc{ReGuidance} informed with the ground truth latents has far superior LPIPS/consistency scores, but surprisingly, DAPS and DDRM are able to yield latents that achieve superior realism scores. These strong empirical results are backed qualitatively in Figure \ref{fig:inpaint}, which shows that \textsc{ReGuidance} resolves high-level realism failures from the baselines and yields \textit{diverse} and \textit{high-quality} completions. We provide additional visual examples in Appendix~\ref{app:further_experiments}.

\begin{figure}[!t]
  \centering
    \includegraphics[width=\linewidth]{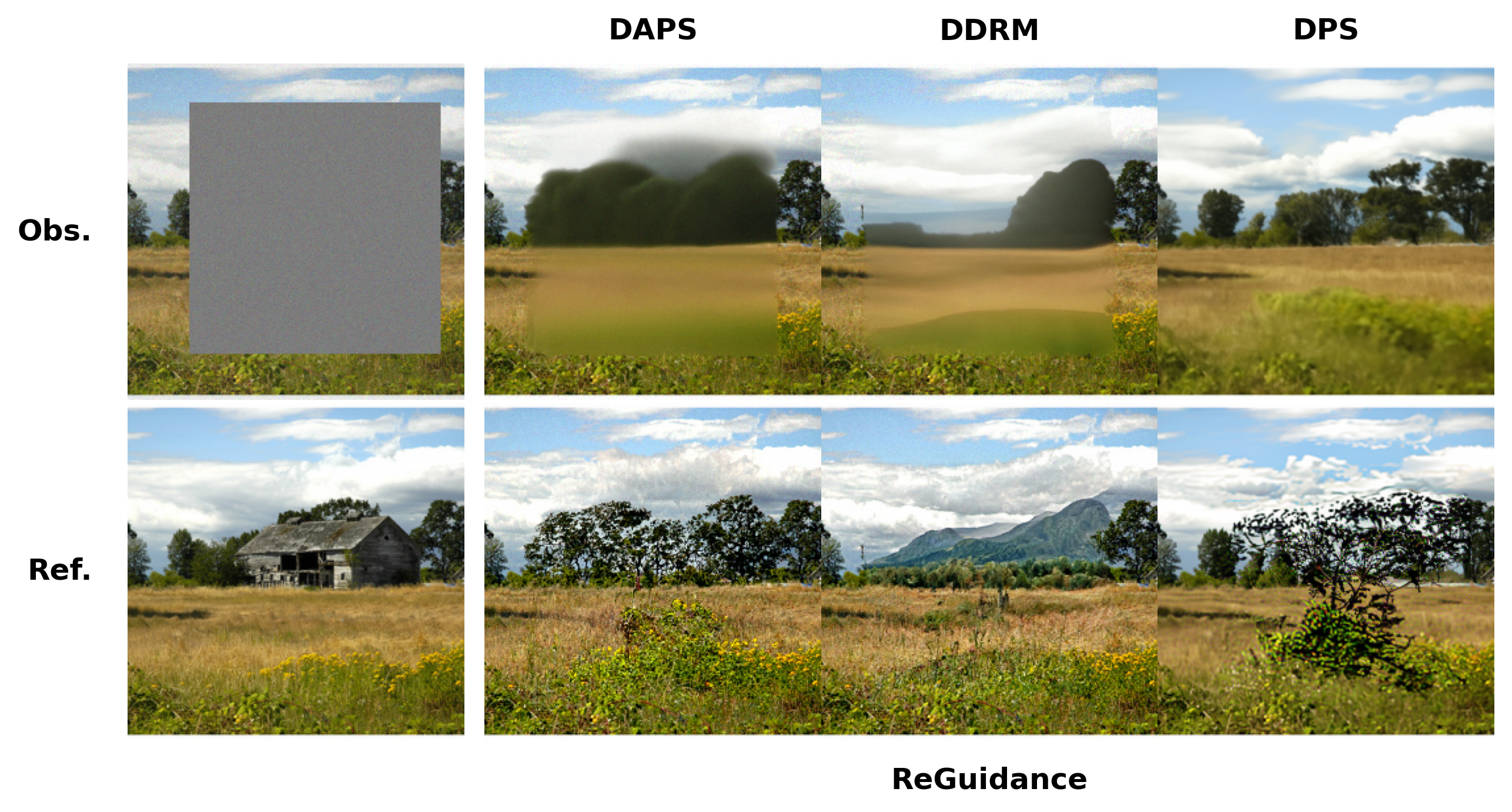}
    \label{fig:ds1}
\captionsetup{font=small}
  \caption{Examples of \textsc{ReGuidance} for inpainting with a $191 \times 191$ box. First column contains the observed measurement and reference image, while latter three demonstrate \textsc{ReGuidance} applied to different baselines.}
  \label{fig:inpaint}
\end{figure}

\begin{table}[ht]
    \centering
    \small                          
    \setlength{\tabcolsep}{6pt}     
    \renewcommand{\arraystretch}{1.08} 
    \begin{tabular}{l|cc|cc}
        \toprule
        \multirow{2}{*}{\textbf{Method}} &
        \multicolumn{2}{c|}{\textbf{Small Inpainting (Box)}} &
        \multicolumn{2}{c}{\textbf{Large Inpainting (Box)}} \\ 
        \cmidrule(lr){2-3} \cmidrule(lr){4-5}
        & LPIPS $\downarrow$ & CMMD $\downarrow$ 
        & LPIPS $\downarrow$ & CMMD $\downarrow$ \\ 
        \midrule
        DAPS                       & 0.228 & 0.500 & 0.418 & 1.103 \\
        DAPS + \reguidance         & \underline{\textbf{0.198}} & \textbf{0.270} & \textbf{0.376} & \textbf{0.628} \\ 
        DDRM                       & 0.238 & 0.664 & 0.426 & 1.164 \\
        DDRM + \reguidance         & \textbf{0.199} & \underline{\textbf{0.263}} & \underline{\textbf{0.372}} & \underline{\textbf{0.616}} \\
        DPS                        & 0.270 & 0.525 & 0.393 & 0.720 \\
        DPS + \reguidance          & \textbf{0.219} & \textbf{0.449} & \textbf{0.386} & 1.107 \\
        Ground Truth + \reguidance & 0.163 & 0.390 & 0.290 & 0.942 \\
        \bottomrule
    \end{tabular}
    \vspace{0.75em}
    \captionsetup{font=small}
    \caption{Experimental results for box-inpainting. Bold numbers show improvements over baseline, and underlined values mark the best result (not including last row which uses knowledge of ground truth).}
    \label{tab:inpainting_metrics}
\end{table}

\begin{figure}[h!]
  \centering
    \includegraphics[width=\linewidth]{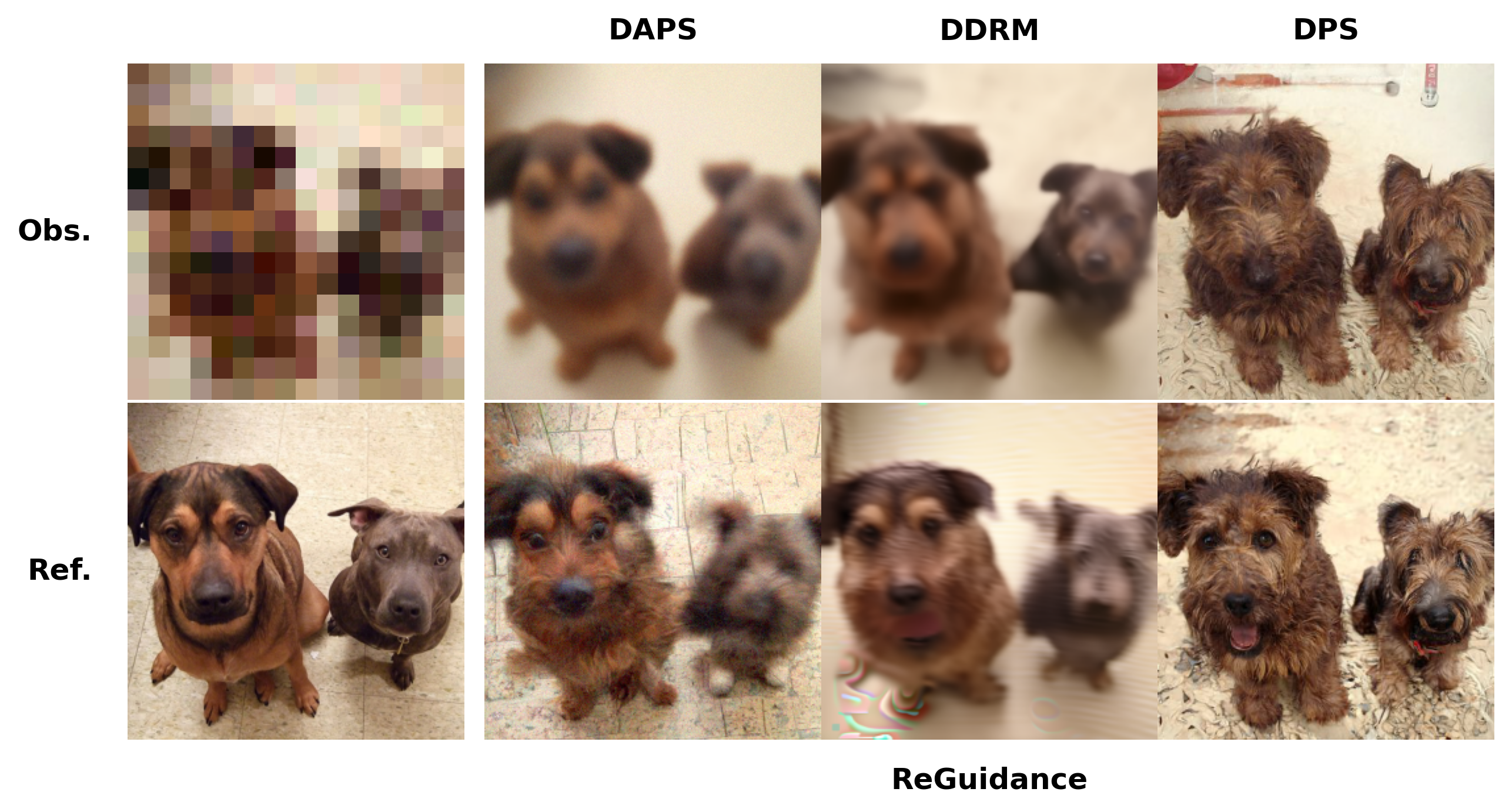}
    \label{fig:ds2}
\captionsetup{font=small}
  \caption{ Examples of \textsc{ReGuidance} for $16\times$ super-resolution. First column contains the observed measurement and reference image, while latter three demonstrate \textsc{ReGuidance} applied to different baselines.}
  \label{fig:downsamp}
\end{figure}

\subsection{Superresolution}
We defer the full results to Appendix~\ref{app:further_experiments} but provide visual examples in Figure \ref{fig:downsamp}. Equipped with a strong initial latent, \textsc{ReGuidance} continues to improve both sample quality and measurement consistency, even under heavy downsampling. As with inpainting, the restorations  add realistic variation to the inverted image. Quantitatively (see Appendix~\ref{app:superresolution}), when initialized with the ground truth latent, \textsc{ReGuidance} performs exceptionally well --- achieving an LPIPS score of \textbf{0.222} and CMMD score of \textbf{0.407} on the hardest superresolution setting.  While the improvements in reward and realism are not as uniformly consistent as in the inpainting case, \textsc{ReGuidance} still gives substantial gains in sample quality --- reducing the CMMD of DAPS from \textbf{2.114} to \textbf{1.429} and the LPIPS of DPS from \textbf{0.523} to \textbf{0.493} at $16\times$ superresolution. We explore possible factors behind this variability, such as discretization error in the latent extraction and weak initial reconstructions, in Appendix~\ref{app:superresolution}.

\subsection{Qualitative behavior}
\label{sec:qualitative}
In this section, we detail observations on how \textsc{ReGuidance} benefits from the structure of good initial latent noises.

\subsubsection{Structure of latent initializations for \reguidance}
\label{sec:latent}

We notice the space of good initial latents is \textit{disconnected}, allowing many possible sample initializations to provide \textit{strong} and \textit{diverse} quality boosts. While a small $L_2$ ball of latents around good latents also provide generally strong reconstructed samples, \textit{it is not true} that all good latents are ``neighbors'' of each other.

We provide a qualitative experiment to test these assertions. For DAPS, DDRM, and DPS, we take their candidate solutions to a large inpainting problem (see Figure \ref{fig:inpaint}), along with the original ground truth image. Firstly, inverting these four images, we notice that the pairwise distances between the resulting latents is quite large: if $d$ is the dimensionality of the latents, then the average distance between two random standard normal latents is approximately $\sim 2 \sqrt{d} = \ell$. Given the latents have shape $[3, 256, 256]$, we find that the pairwise distances between these latents lie in the interval $[0.5\ell, 0.7\ell]$. In particular, these ``good'' latents are quite far apart in space. In Figure \ref{fig:discon_latents}, we see that perturbing these latents by a standard normal vector with standard deviation $0.1$ and applying \textsc{ReGuidance} results in realistic images quite similar to those generated from \textsc{ReGuidance} applied on the non-perturbed latents, regardless of the random seed. In particular, the good latents are robust up to a radius of $\sim 0.1\sqrt{d}$. Beyond this radius, e.g., interpolating two subsequent latents and then applying \textsc{ReGuidance}, the resulting images are much less realistic, and often contain visible defects. 

This suggests that the space of good latents for \textsc{ReGuidance} is \textit{disconnected}, with a small neighborhood around each good latent providing highly similar images, while distinct neighborhoods lead to \textit{diverse} but \textit{strong} solutions to the target inverse problem. In particular, good latents are not necessarily ``neighbors'' of each other, but exist in disjoint neighborhoods within initialization space.

\begin{figure}[htbp]
  \centering
  
    \includegraphics[width=\linewidth]{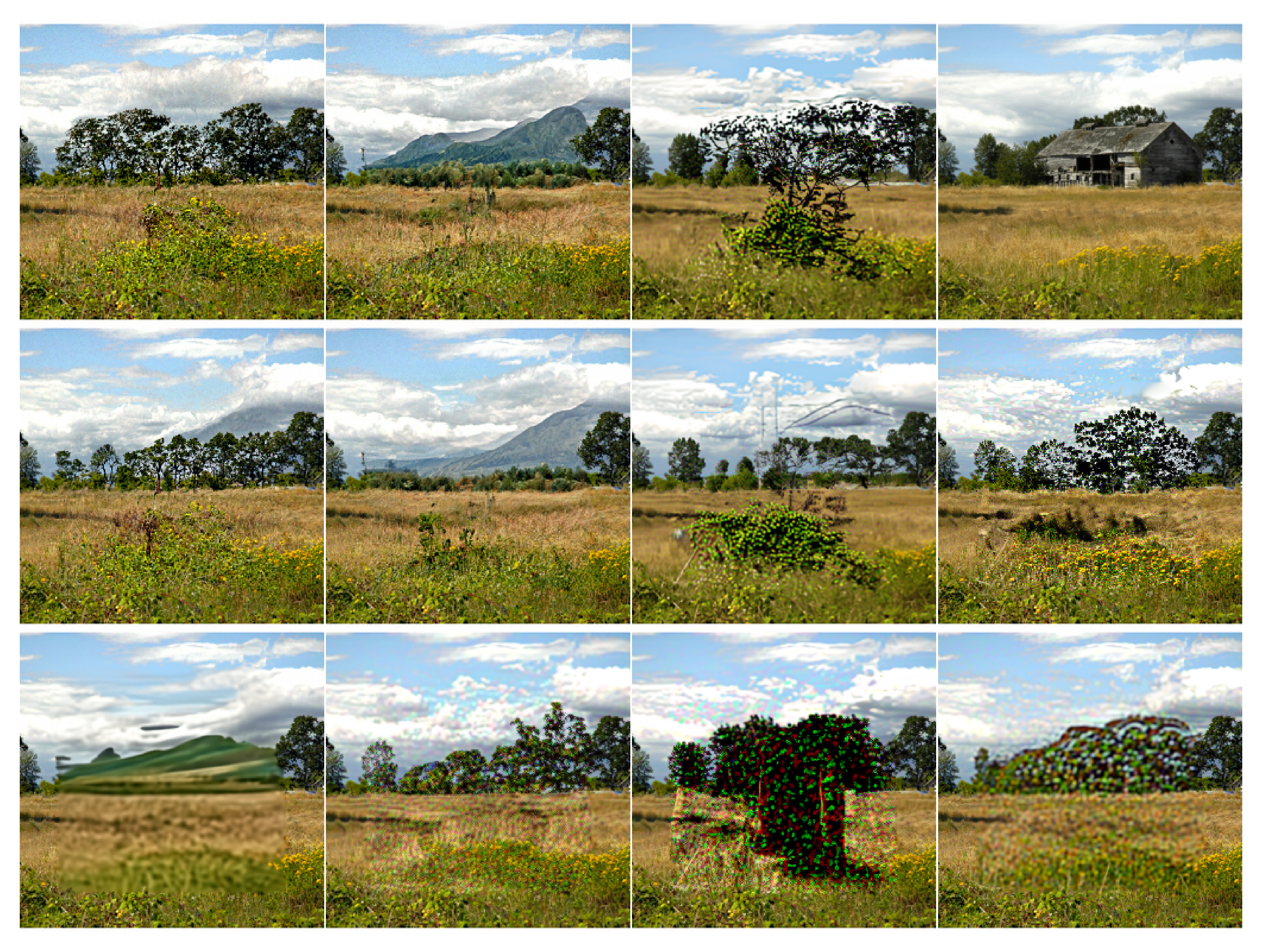}
    \captionsetup{font=small}  \caption{The first row displays solution images generated by \textsc{ReGuidance} using latent initializations from DAPS, DDRM, DPS, and the ground truth image respectively. The second row displays \textsc{ReGuidance} applied on these latent initializations perturbed by a random normal vector with variance $0.1$. The final row displays \textsc{ReGuidance} applied on the current latent averaged with the initial latent of the next column of images (with wrap-around). This figure demonstrates that small neighborhoods around good latents still lead to realistic images, albeit with little variation, while across neighborhoods of good latents, diversity in infillings emerges.} 
  \label{fig:discon_latents}
\end{figure}

\subsubsection{\textsc{ReGuidance} with SDE vs. ODE}
\label{sec:sde}

These strong initializations rely on the \textit{deterministic} sampler for \textsc{ReGuidance}. We demonstrate that \textsc{ReGuidance} with a stochastic DPS sampler (SDE) loses some of this benefit that a strong initialization provides. Using a classic unconditional SDE sampler for diffusion models is known to be memoryless \cite{domingo-enrich2025adjoint}, in the sense that the generated sample does not depend on the initial latent. Adding Brownian motion to DPS similarly weakens dependency on the latent, deteriorating performance as we now show.

To implement \textsc{ReGuidance} with a stochastic sampler, we use DPS with DDIM but now with noise parameter $\eta = 1.0$. In Table \ref{tab:dps-sde}, we see \textsc{ReGuidance} with this DPS-SDE underperforms the baseline in measurement consistency and underperforms the DPS-ODE in realism. Figure \ref{fig:stoch_reg} shows sample \textsc{ReGuidance} DPS-ODE images relative to using the DPS-SDE, stressing the disparity in performance seen in Table \ref{tab:dps-sde}.

\begin{table}[ht]
    \centering
    \small                          
    \setlength{\tabcolsep}{6pt}     
    \renewcommand{\arraystretch}{1.08} 
    \begin{tabular}{l|cc}
        \toprule
        \multirow{2}{*}{\textbf{Method}} &
        \multicolumn{2}{c}{\textbf{Large Inpainting (Box)}} \\ 
        \cmidrule(lr){2-3}
        & LPIPS $\downarrow$ & CMMD $\downarrow$ \\ 
        \midrule
        DAPS                        & 0.418 & 1.103 \\
        DAPS + \reguidance  \hspace{0.1cm}(ODE)       &  \textbf{0.376} & \textbf{0.628} 
         \\
        DAPS + \reguidance \hspace{0.1cm}(SDE)  & 0.470 & 0.815 \\
        \bottomrule
    \end{tabular}
    \vspace{0.75em}
    \captionsetup{font=small}
    \caption{Comparing \textsc{ReGuidance} with the ODE vs the SDE for large box-inpainting on ImageNet. The SDE sampler results in much worse performance than both the baseline and the ODE with regards to consistency (LPIPS), and worse performance than the ODE with regards to realism (CMMD).}
    \label{tab:dps-sde}
\end{table}

\begin{figure}[htbp]
  \centering
  
    \includegraphics[width=\linewidth]{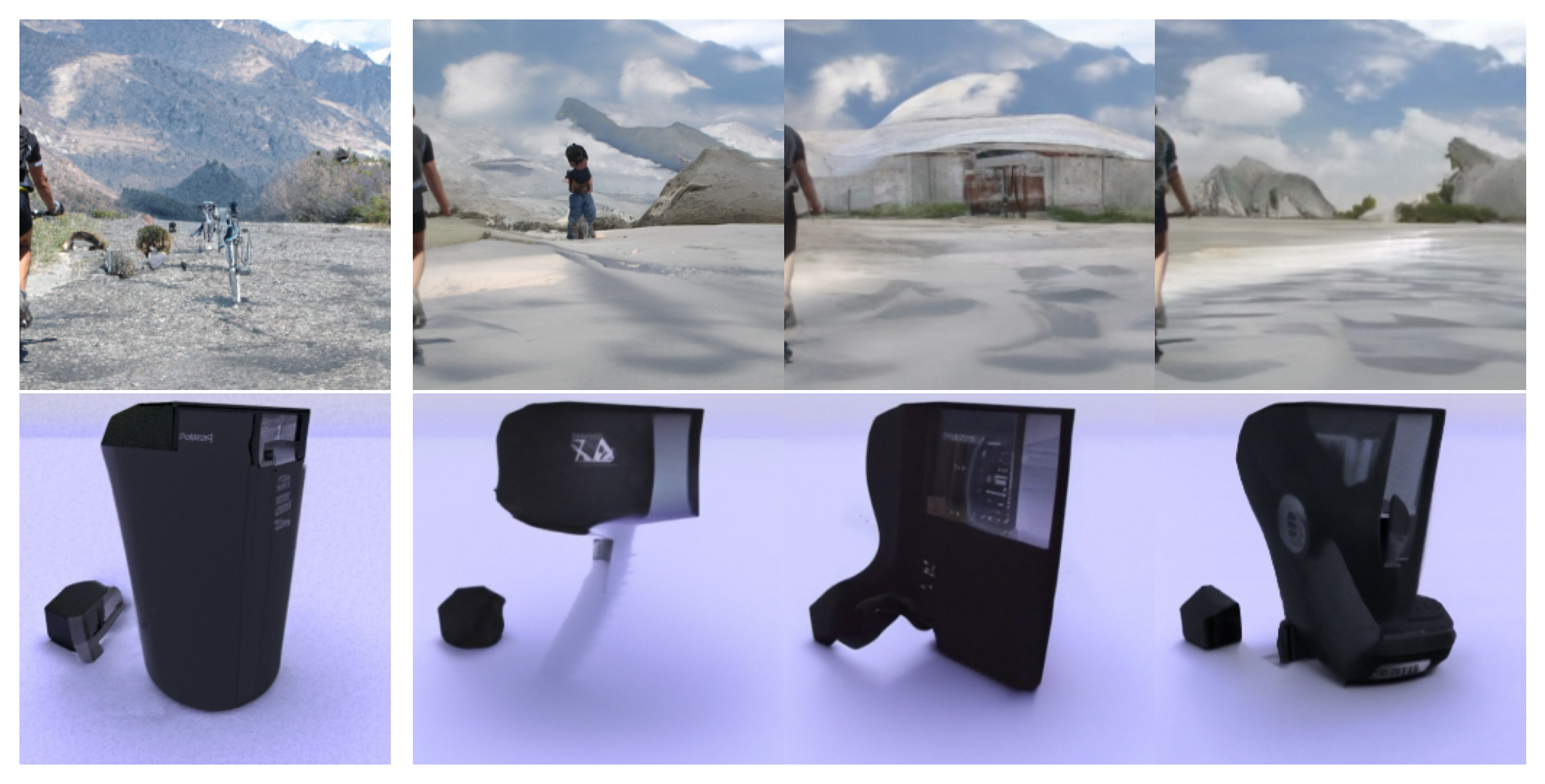}
    \captionsetup{font=small}  \caption{The first column gives \textsc{ReGuidance} with a deterministic DPS-ODE on DAPS candidate latents for large inpainting on ImageNet, and the subsequent three columns are different samples using \textsc{ReGuidance} with a stochastic DPS-SDE sampler. It is evident from the photos that both realism and measurement consistency are lost when using the stochastic \textsc{ReGuidance} sampler.} 
  \label{fig:stoch_reg}
\end{figure}

\section{Theoretical guarantees}

\subsection{Boosting reward}
\label{sec:model}

Here we provide a simple toy model in which we can prove rigorously that \textsc{ReGuidance} decreases the reconstruction loss (i.e., boosts the reward) achieved by the original sample $x$.

\paragraph{Setup.} We consider the following mixture model with \emph{exponentially many} modes. For a parameter $R > 0$, consider the uniform mixture of identity-covariance Gaussians centered at the points $\{R,-R\}^d$. Next, consider an inpainting-style linear measurement 
$A = (e_{i_1}\mid \cdots \mid e_{i_m})^\top$, where $e_i$ denotes the $i$-th standard basis vector in $\mathbb{R}^d$. Suppose we observed measurements $y\in\{R,-R\}^m$ which are consistent with some mode $x'\in \{R, -R\}^d$ (in fact $2^{d-m}$ many such modes), i.e. such that $y = Ax'$. Let $\Lambda$ be the affine subspace spanned by points consistent with this measurement.

Our first result shows that for small values of the hyperparameter $\sigma$, running our algorithm starting at a sample $x$ approximately in a sample $x^{\sf DPS}_T$ which is given by the projection of $x$ onto $\Lambda$. In other words, the reconstruction loss is driven to near zero under our algorithm:

\begin{theorem}[Informal, see Theorem~\ref{thm:project}]\label{thm:informal_projection}
    If $q$ is given by a mixture of identity-covariance Gaussians centered at all $2^d$ points in $\{R,-R\}^d$, and $A \in \{0,1\}^{m\times d}$ is an inpainting measurement, then on input $x\in\mathbb{R}^d$, \textsc{ReGuidance} outputs $x^{\sf DPS}_T$ for which $\norm{\Pi x - x^{\sf DPS}_T} \le \mathrm{poly}(\sigma, e^{-T})$, where $\Pi$ is the projection to the affine subspace given by all $x'$ for which $Ax' = y$.
\end{theorem}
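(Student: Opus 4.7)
The plan is to exploit the product structure of the data distribution to reduce the claim to a one-dimensional analysis of the DPS-ODE on each coordinate.

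First I would observe that $q(x) = \tfrac{1}{2^d}\sum_{s\in\{-1,1\}^d}\prod_i \phi(x_i - R s_i) = \prod_{i=1}^d q_1(x_i)$, where $q_1 = \tfrac{1}{2}(\mathcal{N}(R,1) + \mathcal{N}(-R,1))$ and $\phi$ is the standard normal density: the sum over the $2^d$ sign patterns collapses into a product of two-term sums. Because the OU forward process acts independently on each coordinate, every $q_t$ is also a product measure, so $\nabla \ln q_t(x)$ and $\mu_t(x)$ are coordinate-wise functions and $\nabla \mu_t(x)$ is diagonal. Since $A$ is coordinate selection, $A^\top (y - A\mu_t(x))$ is supported on the measured set $S = \{i_1,\dots,i_m\}$; combined with the diagonal Jacobian, this means the DPS correction acts only on coordinates in $S$, and for each such coordinate depends only on the scalar $x^{\sf DPS}_t[i]$. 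Both the reverse PF-ODE in Step (I) and the DPS-ODE in Step (II) therefore decouple into $d$ independent scalar ODEs.

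For the unmeasured coordinates $i \notin S$, both steps of \reguidance\ reduce to the same unconditional scalar PF-ODE, run first in reverse and then forward; by reversibility of the PF-ODE this composition is the identity, so $x^{\sf DPS}_T[i] = x_i$ exactly, matching $(\Pi x)_i = x_i$. Thus only the measured coordinates can contribute to the final error bound.

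For $i \in S$, set $z_t := x^{\sf DPS}_t[i]$ and $w_t := \mu_{1,T-t}(z_t) - y_i$. The scalar DPS-ODE is
\begin{equation}
    \frac{dz_t}{dt} = z_t + (\ln q_{1,T-t})'(z_t) - \frac{\rho}{\sigma^2}\,\mu'_{1,T-t}(z_t)\,w_t,
\end{equation}
and differentiating $w_t$ via the chain rule yields
\begin{equation}
    \frac{dw_t}{dt} = F(t,z_t) - \frac{\rho}{\sigma^2}\bigl(\mu'_{1,T-t}(z_t)\bigr)^2 w_t,
\end{equation}
where $F(t,z) := -\partial_s \mu_{1,s}(z)\bigr|_{s=T-t} + \mu'_{1,T-t}(z)\bigl[z + (\ln q_{1,T-t})'(z)\bigr]$ is a drift that depends only on the unconditional 1D problem and is bounded by explicit elementary calculation on the two-mode Gaussian mixture. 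A standard integrating-factor argument then gives, up to absolute constants,
\begin{equation}
    |w_T| \;\le\; \exp\bigl(-\tfrac{\rho}{\sigma^2}\textstyle\int_0^T (\mu'_{1,T-s}(z_s))^2\,ds\bigr)\,|w_0| \;+\; \tfrac{\sigma^2}{\rho}\sup_{s\le T} |F(s,z_s)|,
\end{equation}
and since $\mu_{1,0}$ is the identity on $\mathbb{R}$, we have $w_T = z_T - y_i$. Bounding the right-hand side by $\mathrm{poly}(\sigma, e^{-T})$ then finishes the proof.

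The main technical obstacle is establishing a usable lower bound on the damping coefficient $(\mu'_{1,T-t}(z_t))^2$. Near $t = T$ the 1D denoiser tends to the identity and $\mu' \to 1$, giving strong damping; but near $t = 0$ (so $T-t$ large) the denoiser is nearly constant and $\mu'$ is exponentially small, so damping is weak and the initial error $|w_0|$ is not yet contracted. I would split $[0,T]$ into an early window $[0, T - T_0]$ and a stabilization window $[T - T_0, T]$: on the first window the unconditional PF-ODE term dominates and drives $z_t$ toward the support of $q_1$; once $\mu'_{1,\cdot}$ becomes order one on the second window, the integrating-factor bound contracts $w_t$ down to $\mathrm{poly}(\sigma)$ size, and the leftover stems from the boundary effect at the end of the transient, which I expect to contribute the $e^{-T}$ term. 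Making this rigorous requires explicit pointwise control of $\mu_{1,t}$, $\mu'_{1,t}$, and $\partial_s \mu_{1,s}$ for the two-mode Gaussian mixture, together with a confinement bound showing that $z_t$ does not escape to a region where $\mu'$ is exponentially small during the transient, and this is where most of the technical effort will go.
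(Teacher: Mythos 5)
Your proposal follows essentially the same route as the paper's proof of Theorem~\ref{thm:project}: both decouple the dynamics coordinate-wise using the product structure of $q$, dispatch the unmeasured coordinates by reversibility of the probability flow ODE, and contract the residual on the measured coordinates via a Gronwall/integrating-factor argument. The only cosmetic difference is that you track the denoiser residual $\mu_{T-t}(z_t) - y_i$ whereas the paper tracks $y_i - z_t$ directly (absorbing the mismatch into an explicitly bounded error term $E_t$ and keeping the full $\exp\bigl(\int a\bigr)$ factor instead of splitting $[0,T]$ into windows); both bookkeeping choices yield the same $\mathrm{poly}(\sigma, e^{-T})$ bound under a smallness condition such as $\sigma^2 \lesssim 1/T$.
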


\noindent The phenomenon described in Theorem~\ref{thm:informal_projection} is depicted in the left figure in Figure~\ref{fig:sims}. Below, we briefly sketch the key idea, deferring the proof to Appendix~\ref{sec:thm1proof}.

In the setting above, we can write out the DPS-ODE explicitly as follows. Noting that the denoiser can be written as \begin{equation}
    \denoise_t(x) = e^t x_t + (e^t - e^{-t}) \nabla \ln q_t(x) = e^{-t}x + (1 - e^{-2t})R\tanh(Re^{-t}x)\,,
\end{equation}
where $\tanh(\cdot)$ is applied entrywise, we have
\begin{equation}
    v^{\sf DPS}_t(x) = \Bigl[e^{-t}\Id + \frac{1 - e^{-2t}}{e^t}R^2 \diag(\sech^2(Re^{-t}x))\Bigr]\cdot \frac{A^\top (y - A\mu_t(x))}{\sigma^2}\,.
\end{equation}
On the other hand, the other term in the velocity field for the DPS-ODE is $x + \nabla \ln q_t(x) = Re^{-t}\tanh(Re^{-t}x)$.
In particular, as $\sigma \to 0$, the velocity field is dominated by $v^{\sf DPS}_t$. And as $t$ approaches zero as the trajectory nears the end of the reverse process, we see that the velocity field tends towards
\begin{equation}
    \lim_{t\to 0} v^{\sf DPS}_t(x) = \frac{1}{\sigma^2}A^\top (y - Ax)\,.
\end{equation}
Note that if one projects $v^{\sf DPS}_t$ to any of the directions $e_{i_j}$ among the rows of $A$, the projection is simply the score function of the one-dimensional Gaussian with mean $y_{i_j}$ and variance $\sigma^2$. Furthermore, one can verify that the dynamics of this ODE decouple across the coordinates, so that in each of the coordinates $i_j$ corresponding to a row of $A$, the corresponding one-dimensional ODE can be shown to converge to a small neighborhood around $y_{i_j}$. On the other hand, for all other coordinates $\ell \not\in \{i_1,\ldots,i_m\}$, the corresponding one-dimensional ODE is the projection of the \emph{unconditional} probability flow ODE to that coordinate, so by design, the dynamics converge to $x_\ell$.

We also show that one cannot replace the DPS-ODE in the second step of \textsc{ReGuidance} with the analogous \emph{SDE} and achieve the same result.

\begin{theorem}[Informal, see Theorem~\ref{thm:sde_formal}]\label{thm:sde}
    Let $q, A$ be as in Theorem~\ref{thm:informal_projection}. If the DPS-ODE in \textsc{ReGuidance} is replaced by the analogous SDE, then even if $x$ is exactly equal to one of the modes in $\{R,-R\}^d$, running \textsc{ReGuidance} on $x$ results in $x^{\sf DPS}_T$ which is bounded away from any of the modes in $\{R,-R\}^d$ with high probability over the randomness of the SDE.
\end{theorem}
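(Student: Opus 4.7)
The plan is to show that along the $d - m$ unmeasured coordinates $\ell \notin \{i_1,\ldots,i_m\}$, the SDE variant of \reguidance\ reduces exactly to the 1D unconditional reverse OU SDE for the marginal $\bar q \triangleq \tfrac{1}{2}\normal(-R,1) + \tfrac{1}{2}\normal(R,1)$. Since this marginal places constant mass at distance $\Omega(1)$ from both $\pm R$, the output on each such coordinate lies $\Omega(1)$ away from the nearest sign with constant probability, which precludes proximity to any mode in $\{R,-R\}^d$.

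First I would decouple the dynamics. Because $q$ has a product structure across coordinates, both $\nabla \ln q_t$ and the denoiser $\denoise_t$ act coordinatewise, so the Jacobian $\nabla \denoise_t$ is diagonal. Since $A$ is an inpainting measurement, the DPS term $\DPS_t(x) = \tfrac{1}{\sigma^2}\nabla \denoise_t(x) A^\top (y - A\denoise_t(x))$ is supported on the coordinate set $\{i_1,\ldots,i_m\}$. Consequently, for each unmeasured $\ell$, the $\ell$-th component of the SDE is a standalone 1D reverse OU SDE with score $\nabla \ln \bar q_t$, initialized at the deterministic value $x^*_{T,\ell}$ obtained by flowing $x_\ell \in \{-R,R\}$ forward under the unconditional probability flow ODE.

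Next I would invoke exponential forgetting of the reverse SDE to conclude $\mathrm{TV}(\mathrm{law}(x^{\sf DPS}_{T,\ell}),\, \bar q) = o_T(1)$, regardless of the point-mass initialization. The technical nuisance is that the KL divergence between a delta mass and $\bar q_T$ is infinite, which obstructs a direct application of the standard KL-chain-rule bound along the reverse SDE. This is resolved by the early-stopping trick (as in Chen et al., \emph{Sampling is as easy as learning the score}): after any short duration $\epsilon > 0$ of reverse-SDE evolution, the Brownian noise smooths the delta into an absolutely continuous density with finite KL relative to $\bar q_{T-\epsilon}$, after which the standard bound yields TV convergence to $\bar q$ in the remaining time.

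Finally, transferring this to the stated high-probability statement is routine. The marginal $\bar q$ places constant probability $c > 0$ on the event $\{|x| \le R - 1\}$, so once $\mathrm{law}(x^{\sf DPS}_{T,\ell})$ is close to $\bar q$ in TV, coordinate $\ell$ falls $\Omega(1)$ away from both $\pm R$ with constant probability. A single unmeasured $\ell$ already suffices to bound $x^{\sf DPS}_T$ away from every mode in $\{R,-R\}^d$; exploiting the independence across the $d-m$ unmeasured coordinates afforded by the decoupling sharpens this gap to $\Omega(\sqrt{d-m})$ with high probability over the SDE noise. The main obstacle is step two: making the delta-initialized forgetting argument quantitative. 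The cleanest route combines the early-stopping trick with a Girsanov/data-processing calculation, for which one must verify that the Brownian noise --- strongest at the high noise levels near the start of the reverse process --- rapidly smooths the initial point mass into a density with well-controlled score. Everything else is routine given the product structure already exploited in Theorem~\ref{thm:informal_projection}.
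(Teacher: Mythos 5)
Your reduction to the unmeasured coordinates is exactly the paper's first move: the DPS term vanishes there, the dynamics decouple, and each such coordinate runs the one-dimensional unconditional reverse SDE for $\bar q = \tfrac{1}{2}\mathcal{N}(-R,1)+\tfrac{1}{2}\mathcal{N}(R,1)$ started from the deterministic latent. The conclusion from ``marginal close to $\bar q$'' is also fine. The problem is your second step, which is where the actual content lies, and the mechanism you propose there does not work. The early-stopping trick from the sampling literature is a \emph{data-processing} argument: it bounds the TV between the outputs of the reverse SDE under two different initializations by the TV (or KL, via Pinsker) between those initializations. It is used there because the two initializations, $\mathcal{N}(0,\mathrm{Id})$ and $q_T$, are themselves exponentially close in $T$. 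Here your two initializations are a point mass at the latent and $\bar q_T$; smoothing the delta for time $\epsilon$ makes the KL finite, but it remains $\Theta(1)$ (a narrow bump of width $\sqrt{\epsilon}$ versus a distribution of width $\Theta(R)$), and data processing can only \emph{preserve} that gap, never shrink it. So this route yields a vacuous $O(1)$ TV bound, not $o_T(1)$: no Girsanov or KL-chain-rule computation can establish that the reverse SDE \emph{forgets} its initialization, because forgetting is precisely the statement that the output TV is much smaller than the input TV.

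The paper closes this gap with a different and much shorter argument: by the exact time-reversal property, the reverse SDE initialized at a fixed point $z$ outputs a sample from the posterior $\mathrm{law}(x_0 \mid x_T = z)$ under the forward process. Since $x_T = e^{-T}x_0 + \sqrt{1-e^{-2T}}\,g$ decorrelates from $x_0$ exponentially fast, this posterior converges to the prior $\bar q$ with $e^{-\Omega(T)}$ error, which is the ``memorylessness'' you wanted. If you prefer to avoid invoking the conditional-law characterization, you would instead need a genuine contraction/coupling argument for the specific reverse-SDE drift; the data-processing route cannot be repaired.
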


\noindent We defer the proof of this to Appendix~\ref{app:sde}.

\begin{figure}
    \centering
    \includegraphics[width=0.45\linewidth]{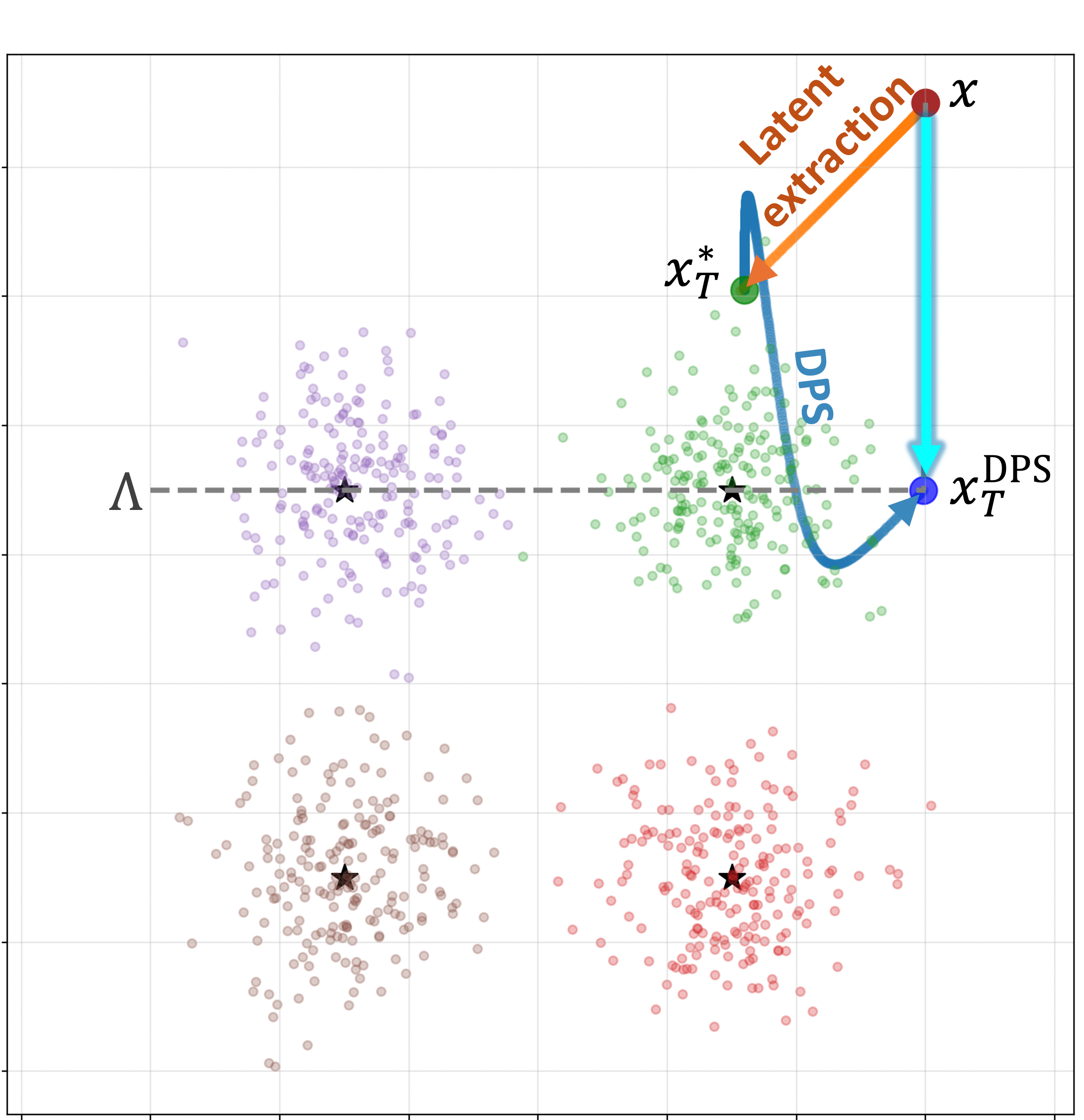}
    \includegraphics[width=0.45\linewidth]{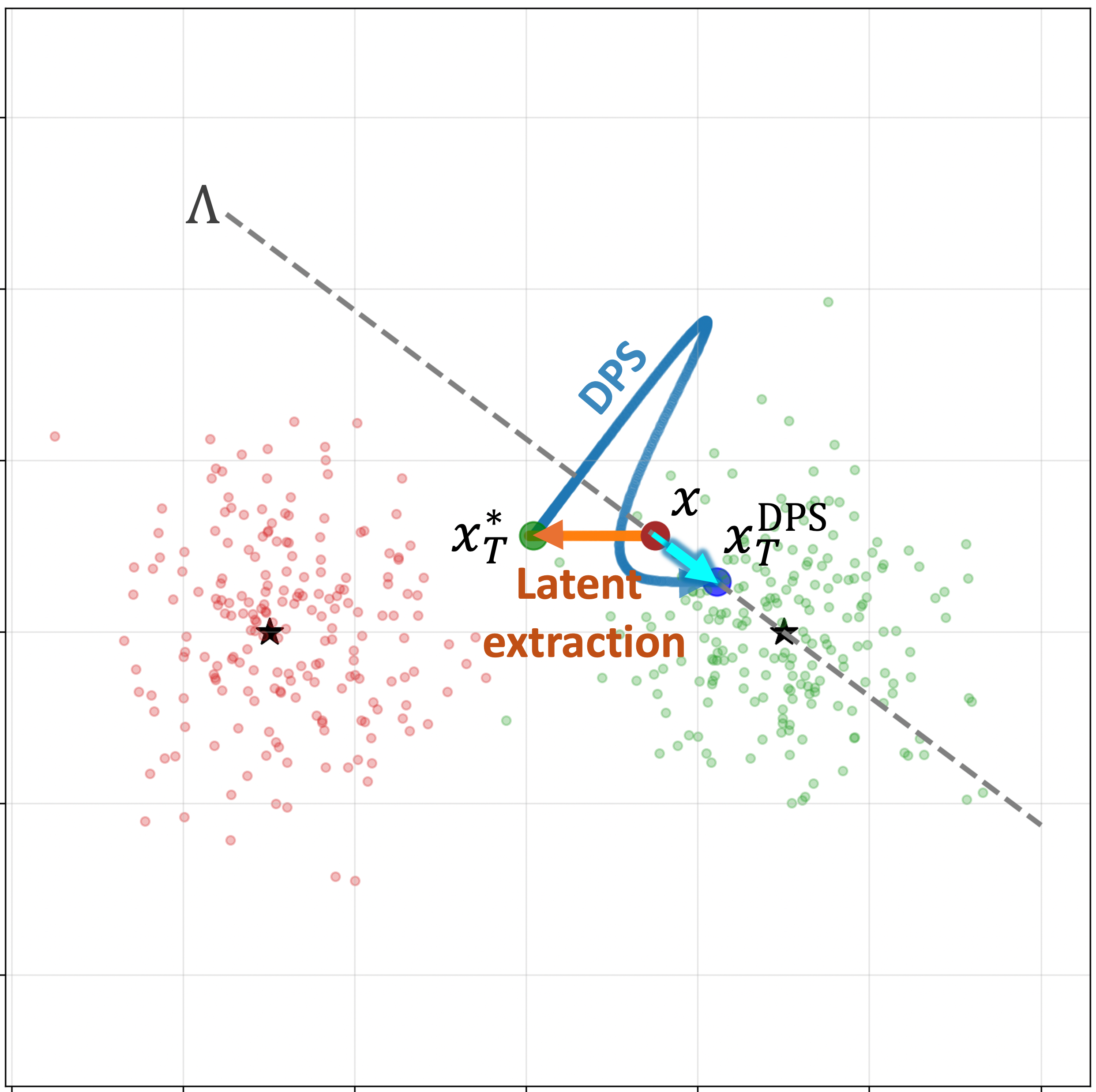}
    \captionsetup{font=small}
    \caption{Initial reconstruction $x$ gets mapped via Step 1 of \textsc{ReGuidance} to latent $x^*_T$, then via Step 2 (DPS-ODE) to $x^{\sf DPS}_T$. Left figure shows $x$ gets projected to subspace $\Lambda$ of maximal reward. Right figure shows even if $x$ is already on this subspace, the algorithm brings it closer to a mode, increasing likelihood / realism.}
    \label{fig:sims}
\end{figure}

\subsection{Boosting realism}
\label{sec:realism}
While the above shows how our algorithm can boost reward, it does not say anything about the extent to which it can bring $x$ towards regions of higher likelihood. Indeed, as we will see in experiments on image data (Section~\ref{sec:experiments}), our algorithm has the additional benefit that even if the initial image $x$ achieves reasonable reward, i.e. negligible reconstruction loss, it is still able to move the sample towards a more realistic image $x^{\sf DPS}_T$.

We now provide a simple toy model in which we can prove rigorously that \textsc{ReGuidance} indeed boosts the likelihood of the sample under the data distribution.

\paragraph{Setup.} We consider a simple bimodal distribution. For a parameter $R > 0$, consider the uniform mixture of two identity-covariance Gaussians centered at the points $z_1 = Re_1$ and $z_2 = -Re_1$, where $e_1$ is the first standard basis vector (by rotation and translation invariance of our arguments below, the specific choice of means is without loss of generality). Consider a \emph{single} linear measurement $A = v^\top \in \mathbb{R}^d$. Suppose we observed measurement $y = \langle v, z_1\rangle$ consistent with mode $z_1$. Let $\Lambda$ be the affine hyperplane spanned by points consistent with this measurement.

We show that even if \textsc{ReGuidance} is applied to a point $x$ already on $\Lambda$, i.e. which already achieves maximal reward, under mild conditions our algorithm will result in a sample $x^{\sf DPS}_T$ which is \emph{closer} to the mode $z_1$ than $x$ is. In other words, the likelihood of the sample is boosted under our algorithm:

\begin{theorem}[Informal, see Theorem~\ref{thm:contract}]\label{thm:informal_contract}
    If $q$ is given by a mixture of identity-covariance Gaussians centered at means $z_1 = Re_1$, $z_2 = -Re_1$, and $A = v^\top$ is an arbitrary single linear measurement, then on input $x$ satisfying $\langle v,x\rangle = y \triangleq \langle z_1, v\rangle$, provided that $\langle x, e_1\rangle < R$ and $x$ is sufficiently close to $z_1$, there is an absolute constant $0 < C < 1$ such that the output $\overline{x}$ of a slight modification of \reguidance satisfies $\norm{\overline{x} - z_1} \le C\norm{x - z_1} + \mathrm{poly}(\sigma, e^{-T})$.
\end{theorem}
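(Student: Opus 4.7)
By rotational invariance I may assume $z_1 = Re_1$ and $z_2 = -Re_1$, so that the mixture score has the explicit form $\nabla \ln q_t(x) = -x + e^{-t}R\tanh(e^{-t}Rx_1)e_1$, with nonlinearity confined to the $e_1$ direction. In Step 1 of \reguidance\ only the first coordinate of $x_t^*$ evolves; in Step 2 the unconditional drift is along $e_1$ and (as I verify below) the DPS drift lies in $\mathrm{span}(e_1,v)$. Thus both steps act as the identity on the orthogonal complement of $\mathrm{span}(e_1,v)$, and the desired contraction is ultimately a statement about the projection of $x-z_1$ onto $\mathrm{span}(e_1,v)\cap v^\perp$; the ``slight modification'' of \reguidance\ presumably enforces that $x-z_1$ lies essentially in this plane (or handles the frozen transverse directions by a separate bookkeeping argument).

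\paragraph{Step 1.} The reverse PF-ODE collapses to the scalar ODE $\mathrm{d}u/\mathrm{d}t = -e^{-t}R\tanh(e^{-t}Ru)$, $u(0)=x_1$, governing $(x_t^*)_1$. For $x_1$ close to $R$, $\tanh(e^{-t}Rx_1)\approx 1$ during the decisive early portion of the trajectory, so the flow is well-approximated by the single-mode flow $u(t) = x_1 - R(1-e^{-t})$; the correction from the $\tanh$-nonlinearity is exponentially small once the trajectory enters the small-$u$ regime, by comparison with a linear ODE. This yields $(x_T^*)_1 = x_1 - R + \mathrm{poly}(e^{-T})$, and hence $x_T^* = x - z_1 + O(e^{-T})$ coordinatewise.

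\paragraph{Step 2 and contraction.} By Tweedie, $\denoise_t(x) = e^{-t}x + (1-e^{-2t})R\tanh(e^{-t}Rx_1)e_1$ and $\nabla\denoise_t(x) = e^{-t}\Id + \beta_t(x)e_1e_1^\top$ with $\beta_t(x) = e^{-t}(1-e^{-2t})R^2\sech^2(e^{-t}Rx_1)$. Consequently the DPS drift equals $(\rho/\sigma^2)(y - v^\top\denoise_{T-t}(x_t^{\sf DPS}))\bigl(e^{-(T-t)}v + \beta_{T-t}(x_t^{\sf DPS})\langle e_1,v\rangle e_1\bigr)$ and lies in $\mathrm{span}(e_1,v)$; combined with the unconditional drift $e^{-(T-t)}R\tanh(e^{-(T-t)}Rx_{t,1}^{\sf DPS})e_1$, it defines a 2D non-autonomous ODE on that plane. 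I would then: (i) verify that \reguidance\ on $x=z_1$ returns $z_1$ up to $\mathrm{poly}(\sigma,e^{-T})$ error, i.e.\ the nominal trajectory $\Phi(z_1)\approx z_1$ is a fixed point; (ii) linearize $\Phi$ at $z_1$ to obtain a linear operator $D\Phi(z_1)$ on $\mathrm{span}(e_1,v)\cap v^\perp$, given by the composition of the scalar Step 1 derivative (strictly less than $1$ by the monotone stability of the $\tanh$-flow near the mode) with the time-$T$ propagator of the linearized Step 2 dynamics, driven by $\nabla\denoise$ and the DPS-residual Jacobian; (iii) bound $\|D\Phi(z_1)\|_{\mathrm{op}} \leq C < 1$. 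A Taylor expansion of $\Phi$ about $z_1$ then gives $\overline{x}-z_1 = D\Phi(z_1)(x-z_1) + O(\|x-z_1\|^2) + \mathrm{poly}(\sigma,e^{-T})$, and the hypothesis that $x$ is sufficiently close to $z_1$ absorbs the quadratic remainder into the stated bound.

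\paragraph{Main obstacle.} The core difficulty is step (iii): bounding the time-$T$ propagator of a non-autonomous linear ODE whose coefficients involve $\tanh$ and $\sech^2$ evaluated at $e^{-(T-t)}R$ times a slowly-varying trajectory. The Jacobian's $\beta_t e_1e_1^\top$ term couples the $e_1$ and $v$ components, so the linearized system is genuinely 2D and time-dependent rather than a product of two 1D contractions; one must carefully exploit the time-localization of $\sech^2$ near $t = T$ together with the stabilizing linearization of the unconditional $\tanh$-score to obtain a strict contraction factor. The assumption $\langle x, e_1\rangle < R$ enters precisely to ensure that the Step 2 trajectory approaches $z_1$ from the ``inside,'' preventing the DPS drift from overshooting past the mode into a regime where the linearized map could be expansive; in the propagator picture this manifests as a sign condition guaranteeing contraction rather than amplification.
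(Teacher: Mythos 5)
There is a genuine gap, and it starts in your Step 1. The reverse probability-flow ODE for the first coordinate is $\dot u = -Re^{-t}\tanh(Re^{-t}u)$, whose drift vanishes at $u=0$; a trajectory started at $x_1\in(0,R]$ therefore can never cross zero and converges to a \emph{strictly positive} limit $c$, not to $x_1 - R\le 0$. Your approximation $u(t)=x_1 - R(1-e^{-t})$ with ``exponentially small'' corrections fails exactly where it matters: once $u$ is of order $e^{t}/R$ the drift is $\approx -R^2e^{-2t}u$, which only shrinks $u$ by a constant multiplicative factor, so the flow stalls at $c=\Theta(1)$ even when $x_1=R$. This is not cosmetic. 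The sign of the latent's first coordinate decides which mode the unconditional part of the DPS-ODE drifts toward in Step 2; your latent $\approx(x_1-R,\,x_2)$ has first coordinate slightly \emph{negative}, which would send the trajectory toward $z_2$ rather than $z_1$. The paper's Lemma~\ref{lem:init} is precisely the sign-preservation statement, and the formal Theorem~\ref{thm:contract} adds a hypothesis ($\langle x^*_T,v\rangle\le cR\langle v,e_1\rangle$) on this latent. Also, the ``slight modification'' is not a projection onto $\mathrm{span}(e_1,v)$ (that reduction is free, as you note yourself); it is dropping the $\sech^2$ term from the denoiser Jacobian in the DPS drift.

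Your Step 2 strategy --- linearize the wrapper at $z_1$ and bound $\|D\Phi(z_1)\|_{\mathrm{op}}$ --- is both unexecuted and not how the contraction arises. Premise (i), that $z_1$ is a fixed point of the wrapper up to $\mathrm{poly}(\sigma,e^{-T})$, is itself essentially as hard as the theorem: it already requires knowing that $\tanh(Re^{-(T-t)}x_t[1])$ saturates to $1-O(\sigma)$ and that the reward is pinned to $y$, and with the latent's first coordinate equal to $c_0=\Theta(1)$ rather than $0$ it is not even clearly true to that precision. Step (iii) is exactly the content you have not supplied, and the heuristic ``Step 1 derivative $<1$'' buys nothing, since with $\rho=0$ Step 2 exactly inverts Step 1 and the composition has derivative $1$. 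The paper instead runs a global three-stage analysis of the trajectory (Lemma~\ref{lem:main}): it shows $x^{\sf MDPS}_t[1]\ge 0$ throughout, that the $\tanh$ term saturates early and never drops below $1-O(\sigma)$, and that $|\langle x^{\sf MDPS}_T,v\rangle-y|\lesssim Rv[1]\sigma\log(1/\sigma)$. The contraction factor $\langle v,e_1\rangle^2$ then comes from bookkeeping rather than a spectral bound: the $v$-component of the final error is $O(R\sigma\log(1/\sigma))$ because the reward is pinned, while the $v^\perp$-component equals $\langle x^*_T,v^\perp\rangle + Rv[2](1+O(\sigma))$, i.e.\ about $v[1]x[2]$, compared with the initial distance $x[2]/v[1]$ measured along the constraint hyperplane. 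To repair your plan you would need to replace both the Step 1 claim and the fixed-point linearization with this kind of global trajectory control.
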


\noindent In fact we can get precise estimates on the constant $C$. In the rest of this section, we provide details about this as well as extensions of this result to the exponentially multimodal data distribution from Vignette 1. The phenomenon described in Theorem~\ref{thm:informal_contract} is depicted in the right figure in Figure~\ref{fig:sims}.

\subsubsection{Proof preliminaries}
\label{sec:proof_prelims}
We begin by giving the formal version of Theorem~\ref{thm:informal_contract}, which states that when initialized at a point with maximal reward, under some mild conditions \textsc{ReGuidance} will contract that point even closer to one of the modes of the data distribution. To do this, we will first introduce a modification of \textsc{ReGuidance} to assist with the formalization. Throughout, given a vector $w$, we will use the shorthand $w[i]$ to denote $\langle w, e_i\rangle$, where $e_i$ is the $i$-th standard basis vector.

\paragraph{Modification of \textsc{ReGuidance}.}

Here we explicitly compute the velocity field of the ODE that we run. Let $q_t$ denote the marginal at time $t$ of running the Ornstein-Uhlenbeck process starting at $q$. Note that
\begin{equation}
    x + \nabla \ln q_t(x) = Re^{-t}\tanh(Re^{-t} x[1])\cdot e_1
\end{equation}
The denoiser $\hat{x}_0(x_t)$ is given by
\begin{equation}
    e^{-t}x_t + (1 - e^{-2t})R\tanh(Re^{-t} x_t[1])\cdot e_1\,,
\end{equation}
and the Jacobian of the denoiser is given by
\begin{equation}
    e^{-t}\Id + \frac{1 - e^{-2t}}{e^t}R^2 \diag(\{\sech^2(Re^{-t}x[1]),0,\ldots,0\})\,.
\end{equation}
The DPS term in the velocity field of the DPS-ODE is thus given by the product
\begin{align*}
    \frac{1}{\sigma^2}&\Bigl[e^{-t}\Id + \frac{1 - e^{-2t}}{e^t}R^2 \diag(\{\sech^2(Re^{-t}x[1]),0,\ldots,0\})\Bigr]\cdot \\
    & \quad \left(vv^\top(Re_1 - e^{-t}x_t - (1 - e^{-2t})R\tanh(Re^{-t}x_t[1])\cdot e_1)\right)\,.
\end{align*}
The $\sech^2$ term is unnecessarily cumbersome for our analysis and does not qualitatively impact the behavior of \textsc{ReGuidance} in this setting, so we define \textsc{ModifiedReGuidance} to be given by the following ODE:
\begin{align}
    \mathrm{d}x^{\sf MDPS}_t &= \Bigl\{Re^{-(T-t)}\tanh(Rx^{\sf MDPS}_t[1])e_1 \\ & \qquad \quad + \frac{e^{-(T-t)}}{\sigma^2}vv^\top\bigl(Re_1 - x^{\sf MDPS}_t \\ 
    &- (1 - e^{-2(T-t)})R\tanh(Rx^{\sf MDPS}_t[1])e_1\bigr)\Bigr\}\,\mathrm{d}t\,. \label{eq:MDPS} \tag{MDPS-ODE}
\end{align}

\noindent
We are now ready to state the formal version of Theorem \ref{thm:informal_contract}.

\begin{theorem}\label{thm:contract}
    Let $q$ be a mixture of two Gaussians $q = \frac{1}{2}\mathcal{N}(z_1,\Id) + \frac{1}{2}\mathcal{N}(z_2,\Id)$ where $z_1 = Re_1$ and $z_2 = -Re_1$. For measurement $y = \langle z_1, v\rangle$ given by a single unit vector $A = v^\top$, let $x^{\sf MDPS}_T$ denote the output of \textsc{ModifiedReGuidance} (Eq.~\eqref{eq:MDPS}) with guidance strength $\rho = 1/\sigma^2$ and time $T$ starting from initial reconstruction $x$ which has maximal reward, i.e., which satisfies $y = \langle x,v\rangle$.

    There is an absolute constant $c > 0$ such that the following holds. Suppose $\langle x^{\sf MDPS}_0,v\rangle \le cR\langle v,e_1\rangle$, where $x^{\sf MDPS}_0 = x^*_T$ is the latent noise vector given by running the unconditional probability flow ODE in reverse starting from $x$. Furthermore, suppose that $\langle x, e_1\rangle < R$; $\sigma \ll 1$; $T \gg \log(R/\sigma)$; and $\langle v, e_1\rangle$ is bounded away from $\{0,1,-1\}$. Then:
    \begin{itemize}[leftmargin=*]
        \item \textbf{Reward approximately preserved}: $\frac{1}{R}|\langle x^{\sf MDPS}_T, v\rangle - y| \lesssim \sigma \log(1/\sigma)\langle v,e_1\rangle$
        \item \textbf{Contraction toward mode}: $\norm{x^{\sf MDPS}_T - z_1} \le C\norm{x - z_1}$ for a factor $0 < C < 1$ which tends towards $\langle v,e_1\rangle^2$ as $\sigma \to 0$.
    \end{itemize}
\end{theorem}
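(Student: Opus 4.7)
My plan is to exploit strong symmetries to reduce Theorem~\ref{thm:contract} to a tractable 2D slow--fast problem. Writing $\alpha := \langle v, e_1\rangle$, $\beta := \sqrt{1-\alpha^2}$, and $u := (v - \alpha e_1)/\beta$, the MDPS-ODE drift lies in $\mathrm{span}(e_1, v)$ (score along $e_1$, DPS along $v$ via the projector $vv^\top$), and the reverse unconditional flow has the same property because $\nabla \ln q_t$ is supported on the $e_1$-direction (the orthogonal marginals of $q_t$ are standard Gaussians with zero drift). Thus $x^{\sf MDPS}_T - x$ is supported in this plane, and the analysis reduces to a planar system in $(a, c) := (\langle x, e_1\rangle, \langle x, u\rangle)$. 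Changing time to $\tau := e^{-(T-t)} \in [e^{-T}, 1]$ cancels the $e^{-(T-t)}$ prefactors in both drift terms, leaving a DPS coefficient of $1/\sigma^2$ and a score coefficient of order $R$, both independent of $\tau$. Decomposing the state as $b := \alpha a + \beta c$ (the reward coordinate) and $w := \beta a - \alpha c$ (its in-plane orthogonal companion), one obtains
\begin{align}
    \frac{db}{d\tau} &= \alpha R\tanh(Ra) + \frac{1}{\sigma^2}\,g(b,w,\tau), \\
    \frac{dw}{d\tau} &= \beta R\tanh(Ra),
\end{align}
where $g(b,w,\tau) := R\alpha - b - (1-\tau^2)R\alpha\tanh(Ra)$ and $a = \alpha b + \beta w$. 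The DPS force cancels in $\dot w$, so $b$ is fast and $w$ is slow.

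A Fenichel-type slow-manifold analysis then drives the argument: $b$ relaxes on timescale $\sigma^2$ to the slow manifold $\mathcal{M} := \{g = 0\}$, namely $b = R\alpha[1 - (1-\tau^2)\tanh(Ra)]$. The hypothesis $\langle x^{\sf MDPS}_0, v\rangle \leq cR\alpha$ ensures that $b_0$ is already close to the slow-manifold value at $\tau_0 = e^{-T}$ (which is near $0$ because $\tanh(Ra_0) \approx 1$ in the positive basin), so the brief transient does not move $w$ appreciably. A Lyapunov-type argument using $\langle x, e_1\rangle < R$ and $\sigma \ll 1$ keeps $a$ in the positive basin throughout, so $\tanh(Ra) \approx 1$ and the reduced dynamics integrate to $w(1) \approx w(\tau_0) + \beta R$. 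At $\tau = 1$ the slow-manifold equation forces $b = R\alpha$ (the $\sigma\log(1/\sigma)$ reward error coming from the deviation of $b$ from $\mathcal{M}$ near the boundary), hence $a_T = R\alpha^2 + \beta w(1)$. Unwinding $w(\tau_0) = \beta a_0 - \alpha c_0$, using that the reverse unconditional ODE preserves the $u$-component ($c_0 = c$) and that $x$ lies on the reward hyperplane (so $c = \alpha(R-a)/\beta$), expresses $a_T - R$ in terms of $a - R$ and the latent coordinate $a_0$. Because $\norm{x - z_1} \propto |a - R|/\beta$ and $\norm{x^{\sf MDPS}_T - z_1} \propto |a_T - R|/\beta$ on the reward hyperplane (modulo the preserved orthogonal part), the claimed contraction factor $C \to \alpha^2$ can be read off.

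I expect the main obstacle to be rigorous quantitative control of the slow-fast approximation across the long horizon $T \gg \log(R/\sigma)$: although the pointwise deviation of $b$ from $\mathcal{M}$ is $O(\sigma^2)$ by standard Fenichel estimates, accumulating and propagating this through the nonlinear slow dynamics requires Gronwall-type bounds that absorb the factor of $T \sim \log(R/\sigma)$. A second subtlety is the boundary behavior near $\tau = 1$, where the slow manifold becomes degenerate in a direction involving $\sech^2(Ra)$ (this is precisely why the theorem uses the \emph{modified} DPS-ODE, which drops that Jacobian correction), contributing the $\sigma\log(1/\sigma)$ reward error. Finally, all three quantitative hypotheses on $\alpha$ enter the analysis: $\beta$ bounded away from $0$ keeps the $(b, w)$-basis well-conditioned, $\alpha$ bounded away from $0$ ensures the DPS force contributes in the $e_1$-direction so as to generate the $\alpha^2$ factor, and both being bounded away from $\pm 1$ keep the reward-hyperplane geometry nondegenerate.
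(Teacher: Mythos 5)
Your outer argument coincides with the paper's: reduce to the plane $\mathrm{span}(e_1,v)$, split into the reward direction $v$ and its in-plane orthogonal complement, observe that the DPS force $vv^\top(\cdots)$ vanishes on the orthogonal component so that its total displacement is $\int Re^{-(T-t)}\tanh(\cdot)\,\mathrm{d}t \approx R\sqrt{1-\langle v,e_1\rangle^2}$ once $\tanh(\cdot)\approx 1$ along the trajectory, and combine this with reward preservation, the fact that the reverse unconditional flow leaves the $e_1$-orthogonal coordinate untouched (Lemma~\ref{lem:init}), and the reward-hyperplane geometry to read off the factor $\langle v,e_1\rangle^2$. Your final algebra ($a_T - R = \beta w_0$, $c_T = -\alpha w_0$, hence $\norm{x^{\sf MDPS}_T - z_1} = |w_0|$ versus $\norm{x-z_1} = x[2]/\alpha$) is a clean restatement of the paper's computation in the $v,v^\perp$ basis.

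The gap is in the inner dynamics, i.e., the content of Lemma~\ref{lem:main}. Your Fenichel picture captures what the paper does in Stage~3 (the self-consistent ODE of Eq.~\eqref{eq:selfconsistent} plus the Alekseev--Gr\"obner perturbation bound), but your treatment of the initial transient rests on the claim that $b_0$ starts near the slow manifold ``because $\tanh(Ra_0)\approx 1$ in the positive basin.'' That is not justified and is generally false: the latent $x^*_T$ is obtained by reversing the unconditional flow from a point near the mode, which drives the first coordinate toward the origin, so Lemma~\ref{lem:init} only guarantees $\mathrm{sgn}(a_0) = \mathrm{sgn}(x[1])$ and $a_0$ can be arbitrarily close to $0$. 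Consequently the $\{g=0\}$ branch you relax to at $\tau_0=e^{-T}$ is not the $\tanh\approx 1$ branch, and the fast force initially pushes $b$ toward $R\alpha(1-\tanh(Ra))$, which is self-consistently coupled to $a=\alpha b+\beta w$ with $w_0$ possibly negative. The paper's Stages~1 and~2 exist precisely to show that the first coordinate grows into the saturated regime while $\langle x'_t,v\rangle$ does not overshoot $\epsilon Rv[1]/2$ before then --- this is where the hypothesis $\langle x^{\sf MDPS}_0,v\rangle \le cR\langle v,e_1\rangle$ and the stopping time $T^*$ are actually consumed, via explicit integrations (the $\erfi$ bounds in Eqs.~\eqref{eq:intermed_stage1}--\eqref{eq:stage2_bound}). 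Without an argument of this kind, neither the statement that $\tanh(\cdot)\ge 1-O(\sigma)$ for all but a negligible portion of the $v^\perp$-displacement integral, nor the $R v[1]\sigma\log(1/\sigma)$ reward error, is established, and both the contraction factor and the reward bound remain unproved.
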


\noindent Before proceeding to the proof, we provide some simplifying reductions.

\paragraph{Reparametrization.} Instead of tracking $x^{\sf MDPS}_t$, we will track $x'_t \triangleq e^{-(T-t)}x^{\sf MDPS}_t$. Under this change of variable, Eq.~\eqref{eq:MDPS} becomes
\begin{align*}
    \mathrm{d}x'_t &= \Bigl\{x'_t + Re^{-2(T-t)}\tanh(Rx'_t[1])e_1 \\ & \qquad \quad + \frac{e^{-2(T-t)}}{\sigma^2}vv^\top\bigl(Re_1 - x'_t - (1 - e^{-2(T-t)})R\tanh(Rx'_t[1])e_1\bigr)\Bigr\}\,\mathrm{d}t\,.
\end{align*}

\paragraph{Reduction to two-dimensional problem.} Note that the drift only depends on the projection of the trajectory to the two-dimensional subspace spanned by $v$ and $e_1$. Furthermore, the construction of the latent noise vector $x^*_T$ out of initial reconstruction $x$ is given by running an ODE which decouples across every coordinate vector $e_i$. We will thus henceforth assume without loss of generality that the original Gaussian mixture was two-dimensional by projecting to the span of $v$ and $e_1$. Extend $e_1$ to an orthonormal basis for this subspace; we will write occasionally express points in the coordinates given by this basis, e.g. $v = (v[1],v[2])$.

We can always assume without loss of generality that $v[1]\ge 0$. Furthermore, as the data distribution is symmetric under reflection around $e_1$, we can additionally assume without loss of generality that $v[2] \ge 0$.

Additionally, let $v^\perp = (-v[2], v[1])$ denote the orthogonal complement of $v$.

\paragraph{Latent noise vector}

Let $x^*_T$ denote the latent noise vector obtained by starting from initial reconstruction $x$ and running the unconditional probability flow ODE in reverse for time $T$.

\begin{lemma} \label{lem:init}
    $x^*_T = (c,x[2])$ for some $c$ for which $\mathrm{sgn}(c) = \mathrm{sgn}(x[1])$.
\end{lemma}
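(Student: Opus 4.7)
The plan is to exploit the explicit symmetry of the two-dimensional Gaussian mixture $q = \tfrac{1}{2}\mathcal{N}(Re_1,\Id) + \tfrac{1}{2}\mathcal{N}(-Re_1,\Id)$ to show that the reverse unconditional probability flow ODE is effectively a one-dimensional ODE acting only on the $e_1$-coordinate. Concretely, I would compute the score of $q_t$ in closed form, observe that the velocity field of the reverse ODE is supported only in the $e_1$-direction, and then separately analyze the two coordinates.

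First I would write down the score. Because the OU process preserves identity-covariance Gaussians, the marginal law is $q_t = \tfrac{1}{2}\mathcal{N}(Re^{-t}e_1,\Id) + \tfrac{1}{2}\mathcal{N}(-Re^{-t}e_1,\Id)$, and factoring the mixture density through $\cosh(Re^{-t}x[1])$ yields
\begin{equation}
    \nabla \ln q_t(x) = -x + Re^{-t}\tanh(Re^{-t}\,x[1])\,e_1.
\end{equation}
Consequently, the drift of the reverse probability flow ODE from the pseudocode of \reguidance\ simplifies to
\begin{equation}
    -(x + \nabla \ln q_t(x)) = -Re^{-t}\tanh(Re^{-t}\,x[1])\,e_1,
\end{equation}
which has zero $e_2$-component.

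Next I would read off the two claims. The second coordinate has zero drift along the entire trajectory, so it is conserved, yielding $x^*_T[2] = x[2]$. For the first coordinate, the induced scalar ODE $\dot u_t = -Re^{-t}\tanh(Re^{-t}u_t)$ admits $u\equiv 0$ as a stationary solution, and its right-hand side is globally Lipschitz in $u$. By uniqueness of solutions to Lipschitz ODEs, no trajectory starting from $u_0 \neq 0$ can cross zero, so $\mathrm{sgn}(x^*_T[1]) = \mathrm{sgn}(x[1])$, giving the claim with $c \triangleq x^*_T[1]$.

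I do not expect any real obstacle here: the argument reduces to writing down a one-line formula for the score and invoking sign-preservation for a scalar ODE with $0$ as a fixed point. The only subtlety worth flagging is to confirm that the earlier reduction to the two-dimensional subspace $\mathrm{span}(v,e_1)$ is compatible with the decoupling above; this is immediate because the mixture means lie along $e_1$, so the score depends only on $x[1]$ regardless of how the second basis vector is chosen, and hence the reverse ODE preserves the $e_2$-coordinate in whichever orthonormal frame we fix.
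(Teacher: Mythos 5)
Your proposal is correct and follows essentially the same route as the paper: compute the score to see the reverse-ODE drift is supported on $e_1$, conclude the second coordinate is conserved, and argue sign preservation for the resulting scalar ODE. The only cosmetic difference is that you justify sign preservation via Lipschitz uniqueness at the fixed point $u\equiv 0$, whereas the paper appeals to monotonicity of the forward-time flow and reverses time; both are sound and your version is, if anything, slightly more explicit.
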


\noindent We defer the proof to Appendix~\ref{app:realism}.

\subsubsection{Proof overview}

Our high-level strategy will be to argue that for most of the trajectory, $\tanh(Rx'_t[1])$ is very close to $1$. Note that the drift of the process $(x^{\sf MDPS}_t)$ in the direction $v^\perp$ is $Re^{-2(T-t)}\tanh(Re^{-(T-t)}x^{\sf MDPS}_t[1])$ and thus entirely dictated by $(x^{\sf MDPS}_t[1])$. Provided that the latter is close to $1$ for most of the trajectory, this ensures that the total movement in the $v^\perp$ direction is close to $-Rv[2]$ (see Eq.~\eqref{eq:vperp_movement} below). We will also need to track the total movement in the $v$ direction; in fact in the course of analyzing the movement in this direction, we will show that $\tanh(Rx'_t[1]) \approx 1$ for most of the trajectory as a byproduct.

Roughly speaking, our analysis of the trajectory of \textsc{ModifiedReGuidance} can be broken into three stages, depicted in Figure~\ref{fig:stages}:
\begin{itemize}[leftmargin=*]
    \item \textbf{Stage 1}: In this stage, $\tanh(Rx'_t[1])$ increases from a small value to nearly $1$ over a time window of length roughly $T - O(\log(R/\sigma))$ (see the definition of $T'_1$ below). At the same time, as $x'_t[1]$ is increasing, the quantity $\langle x'_t, v\rangle$ is increasing but not too quickly, up to a value of at most $\tilde{O}(\sigma\sqrt{R})$.
    \item \textbf{Stage 2}: In this stage, from time $T - O(\log(Rv[1]/\sigma))$ to time $T - \log(1/\sigma)$, $x'_t[1]$ is still increasing, and $\langle x'_t, v\rangle$ steadily increases to a value of at most $\tilde{O}(\sigma R)$.
    \item \textbf{Stage 3}: In the final stage, from time $T - \log(1/\sigma)$ to time $T$, $\tanh(Rx'_t[1])$ is no longer monotonically increasing but also never drops below $1 - O(\sigma)$. Because the value of $\tanh(Rx'_t[1])$ is still close to $1$, the evolution of $\langle x'_t,v\rangle$ is well-approximated by a self-consistent evolution purely in the direction of $v$ (see Eq.~\eqref{eq:selfconsistent}). The form of this evolution can be then be used both to derive a good estimate for the final value $\langle x'_T, v\rangle$, and also to control the fluctuations of $\tanh(Rx'_t[1])$ around $1$ in this final stage. 
\end{itemize}

\begin{figure}
    \centering
    \includegraphics[width=0.5\linewidth]{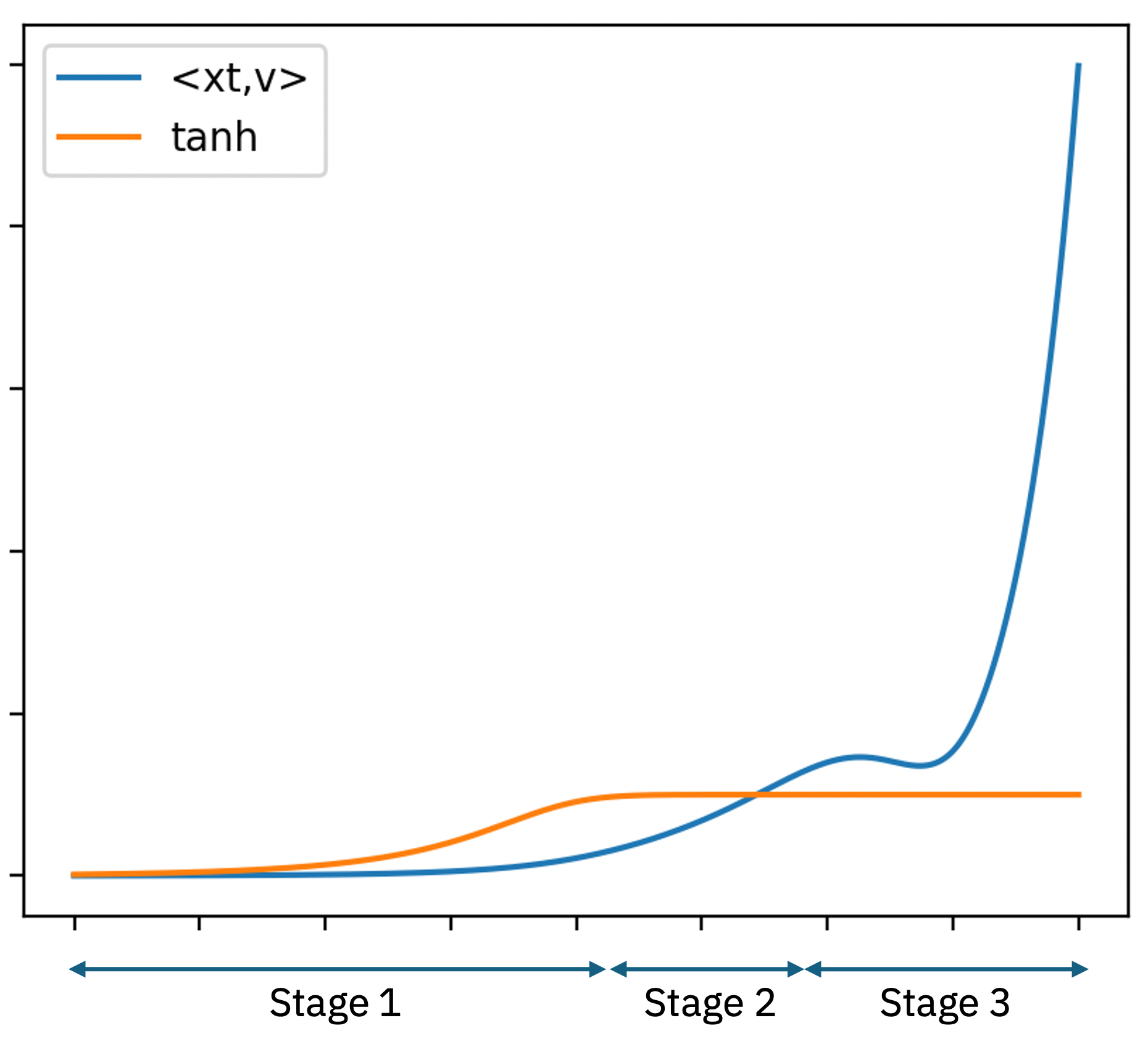}
    \caption{Depiction of evolution of $\tanh(Rx'_t)$ and $\langle x'_t, v\rangle$ and the three stages of analysis}
    \label{fig:stages}
\end{figure}

\noindent Altogether, this three-stage analysis will allow us to prove the following key lemma:

\begin{lemma}[Dynamics of \reguidance]\label{lem:main}
    Let
        \begin{equation}
            \delta' \triangleq e^{-2(T-T'_1)} \asymp \frac{\log(1/\epsilon)\sigma^2}{Rv[1]}\,. \label{eq:deltaprime_def}
        \end{equation}
    If $x^{\sf MDPS}_0[1] \ge 0$, then:
    \begin{enumerate}
        \item $x^{\sf MDPS}_t[1] \ge 0$ for all $0 \le t \le T$.
        \item For $t \ge T - \log(1/\delta')$, $\tanh(Re^{-(T-t)}x^{\sf MDPS}_t[1]) \ge 1 - O(\sigma)$.
        \item $|\langle x^{\sf MDPS}_T, v\rangle - y| \lesssim Rv[1]\sigma\log(1/\sigma)$.
    \end{enumerate}
\end{lemma}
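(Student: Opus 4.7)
The plan is to track the trajectory in the reparametrized coordinates $x'_t \triangleq e^{-(T-t)} x^{\sf MDPS}_t$ from Section~\ref{sec:proof_prelims}, where the dynamics read
\begin{equation*}
\dot x'_t = x'_t + Re^{-2(T-t)}\tanh(Rx'_t[1])\,e_1 + \tfrac{e^{-2(T-t)}}{\sigma^2}\,vv^\top\bigl(Re_1 - x'_t - (1 - e^{-2(T-t)})R\tanh(Rx'_t[1])e_1\bigr).
\end{equation*}
Since the drift lies in the two-dimensional span of $\{e_1, v\}$, it suffices to track the scalar processes $w_t \triangleq x'_t[1]$ and $u_t \triangleq \langle x'_t, v\rangle$. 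The three stages in the proof overview are governed by how the DPS prefactor $e^{-2(T-t)}/\sigma^2$ evolves: subdominant in Stage 1, crossing order one as we enter Stage 2 near $t = T - \log(1/\delta')$, and fully dominant in Stage 3 from $t = T - \log(1/\sigma)$ onward.

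For Part 1, positivity of $w_t$ follows by a barrier argument: $w_0 \ge 0$ by Lemma~\ref{lem:init}, and at any hypothetical crossing time $t^\ast$ with $w_{t^\ast}=0$ the tanh term vanishes, reducing the drift of $w_t$ to $\tfrac{e^{-2(T-t^\ast)}}{\sigma^2}\, v[1]\bigl(y - u_{t^\ast}\bigr)$, which is nonnegative so long as $u_t$ does not overshoot $y$ by more than $O(\sigma R v[1])$ --- an invariant carried from the Part 3 estimates below. For Part 2, in Stage 1 I would linearize $\tanh(Rw_t) \approx Rw_t$ and show $\dot w_t \gtrsim w_t\bigl(1 + R^2 e^{-2(T-t)}\bigr)$, so that $w_t$ grows from its initial scale $e^{-T} x^{\sf MDPS}_0[1]$ to $w_t \gtrsim 1/R$ within time $T - \Theta(\log(R/\sigma))$. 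A quasi-equilibrium argument in Stage 2, exploiting the now-dominant DPS prefactor, then forces $w_t$ to continue increasing so that $1 - \tanh(Rw_t) \le O(\sigma)$ for all $t \ge T - \log(1/\delta')$, giving Part 2.

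For Part 3, in Stage 3 the saturation $\tanh(Rw_t) = 1 - O(\sigma)$ allows me to substitute $1$ for the tanh terms in the projection of the ODE onto $v$, producing the approximately autonomous affine ODE
\begin{equation*}
\dot u_t \approx u_t + Re^{-2(T-t)}\,v[1] + \tfrac{e^{-2(T-t)}}{\sigma^2}\,\bigl(y - u_t - (1 - e^{-2(T-t)}) R v[1]\bigr).
\end{equation*}
Integrating this via Duhamel's formula over the Stage 3 window of length $\log(1/\sigma)$ gives $|u_T - y| \lesssim Rv[1]\sigma\log(1/\sigma)$, with the $\log(1/\sigma)$ factor coming from the window length and the $O(\sigma)$ perturbation from $1 - \tanh(Rw_t)$ absorbed into the error. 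The main obstacle I anticipate is closing the mutual bootstrap between the $u$- and $w$-dynamics: the estimate $\tanh(Rw_t) \approx 1$ that drives the $u$-analysis itself relies on $w_t$ staying bounded away from zero against the DPS pull-back that is active when $u_t > y$. I would handle this via Gronwall, assuming $\tanh(Rw_t) \ge 1 - C\sigma$ on a subinterval, deriving the sharp bound on $|u_t - y|$, and verifying that the drift on $w_t$ keeps $R w_t$ above the relevant threshold; the scaling $\delta' \asymp \sigma^2/(Rv[1])$ is precisely the level at which this bootstrap self-consistently closes.
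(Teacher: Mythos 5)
Your overall architecture matches the paper's: the same reparametrization $x'_t = e^{-(T-t)}x^{\sf MDPS}_t$, the reduction to the two scalars $x'_t[1]$ and $\langle x'_t,v\rangle$, a three-stage analysis keyed to the size of $e^{-2(T-t)}/\sigma^2$, and in Stage 3 the same self-consistent affine ODE in the $v$ direction (your Duhamel integration is exactly the paper's variation-of-constants / Alekseev--Gr\"obner step, and your ODE for $u_t$ agrees with Eq.~\eqref{eq:selfconsistent} after substituting $s = t - T_2$). However, your Stage 1 has a genuine gap. You propose to drive $w_t = x'_t[1]$ up to scale $1/R$ by linearizing the tanh and using the multiplicative bound $\dot w_t \gtrsim w_t(1+R^2 e^{-2(T-t)})$, growing from the initial scale $w_0 = e^{-T}x^{\sf MDPS}_0[1]$. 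Over the window $[0,\,T-\Theta(\log(R/\sigma))]$ the factor $\exp\bigl(\int R^2 e^{-2(T-s)}\,\mathrm{d}s\bigr)$ is only $1+O(\sigma^2/v[1]^2)$ and the factor $e^t$ only restores $w$ to roughly $x^{\sf MDPS}_0[1]\cdot(\sigma/R)^{\Theta(1)}$; since the hypothesis is merely $x^{\sf MDPS}_0[1]\ge 0$ (it may be $0$ or tiny), multiplicative growth alone can never reach $1/R$. The term you discarded is the one that does the work: while $\tanh(Rx'_t[1])\le 1/2$ and $\langle x'_t,v\rangle$ is still small, the DPS guidance contributes the \emph{additive} forcing $\frac{e^{-2(T-t)}}{\sigma^2}\,v[1]\bigl(Rv[1]/2 - \langle x'_t,v\rangle\bigr)\gtrsim \frac{e^{-2(T-t)}}{\sigma^2}Rv[1]$ to the $e_1$-drift, and integrating this gives $x'_{T_1}[1]\gtrsim \frac{e^{-2(T-T_1)}}{\sigma^2}Rv[1]\asymp 1$ independently of $w_0$; this is exactly how the paper reaches $\tanh > 1/2$ by time $T_1$ and then $1-\epsilon$ by $T'_1$.

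A second, smaller issue is the bootstrap you flag but do not close. Your Part 1 barrier argument at $w_{t^*}=0$ needs $u_{t^*}\le y$ (up to slack), which you import from "the Part 3 estimates below," while Part 3 itself presumes $\tanh(Rw_t)\approx 1$. The paper resolves this circularity with explicit stopping times: $T^*$ (first time $\langle x'_t,v\rangle > \epsilon Rv[1]/2$), shown a posteriori from the Stage 1--2 integrals to occur after $T_2$, and $T_3(\xi)$ in Stage 3, where for $\xi = C\sigma$ the full $e_1$-drift is computed to be nonnegative, forcing $T_3(\xi)=\infty$. Note also that in Stage 3 a barrier at $w=0$ is too weak: $\langle x'_t,v\rangle$ can overshoot $y$ by $O(Rv[1]\sigma)$, at which point the drift at $w=0$ is negative, so the invariant you must propagate is $\tanh(Rw_t)\ge 1-O(\sigma)$ (i.e.\ $w_t\gtrsim \log(1/\sigma)/R$), not mere nonnegativity.
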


\noindent The proof of this is quite subtle and is the crux of our argument. We defer the proof to Appendix~\ref{app:main_lemma}.
With Lemma~\ref{lem:main} in hand, we can now conclude the proof of our main result.

\begin{proof}[Proof of Theorem~\ref{thm:contract}]

The guarantee on reward follows from Part 3 of Lemma~\ref{lem:main}.

We now establish contraction towards the mode. Recall from Section~\ref{sec:proof_prelims} the definition of $v^\perp = (-v[2], v[1])$ and the fact that we are assuming without loss of generality that $v[1], v[2] \ge 0$.

Let us compute the total movement in the direction of $v^\perp$ over the course of the DPS ODE trajectory. For this calculation, let us work with the original process $(x^{\sf MDPS}_t)$ rather than the reparametrized one. The drift in the $v^\perp$ direction is given by
\begin{equation}
    \Bigl\langle \frac{\mathrm{d}x^{\sf MDPS}_t}{\mathrm{d}t}, v^\perp\Bigr\rangle = -Re^{-(T-t)}\tanh(Re^{-(T-t)}x^{\sf MDPS}_t[1])\cdot v[2]\,. \label{eq:vperp}
\end{equation}
In the first two parts of Lemma~\ref{lem:main}, it was shown that if $x^{\sf MDPS}_0[1]\ge 0$, then $x^{\sf MDPS}_t[1] \ge 0$ for all $0 \le t\le T$. Furthermore, for $t\ge T'_1$, $\tanh(Re^{-(T-t)}x^{\sf MDPS}_t[1]) \ge 1 - O(\sigma)$.

Integrating Eq.~\eqref{eq:vperp}, we conclude that
\begin{align}
    \langle x^{\sf MDPS}_T, v^\perp\rangle - \langle x^{\sf MDPS}_0, v^\perp\rangle &\le (1 - O(\sigma))v[2] \cdot \int^T_{T-\log(1/\delta')} -Re^{-(T-t)}\,\mathrm{d}t \\
    &= (1 - O(\sigma)) Rv[2]\cdot (1 - \delta') \\
    &\le  -(1 - O(\sigma))Rv[2]\,. \label{eq:vperp_movement}
\end{align}
Recall that $x^{\sf MDPS}_0 = x^*_T = (c,x[2])$ for $c$ defined in Lemma~\ref{lem:init} and $x[2]$ the second coordinate of the initial reconstruction $x$. At the end of the trajectory, $\langle x^{\sf MDPS}_T, v^\perp\rangle = -(c+R)v[2]+v[1]x[2] \pm O(R\sigma v[2])$ and $\langle x^{\sf MDPS}_T, v\rangle = (1\pm \sigma\log(1/\sigma)) Rv[1]$.

Because $x$ was assumed to achieve maximal reward $\langle v,x\rangle = Rv[1]$, so that $x[1] = R - \frac{v[2]x[2]}{v[1]}$ (here we are using that $v[1]$ is bounded away from zero) and thus 
\begin{equation}
    \langle x, v^\perp\rangle = -v[2]x[1] + v[1]x[2] = -Rv[2] + \frac{v[2]^2x[2]}{v[1]} + v[1]x[2].
\end{equation}
Note that $x[2] > 0$ because we are assuming in Theorem~\ref{thm:contract} that $x[1] < R$ and furthermore we are assuming $v[1], v[2] \ge 0$ without loss of generality.

In contrast, we have
\begin{equation}
    \langle z_1, v^\perp\rangle = -Rv[2]
\end{equation}

We thus have
\begin{equation}
    \norm{z_1 - x} = |\langle x - z_1, v^\perp\rangle| \le \frac{v[2]^2x[2]}{v[1]} + v[1]x[2]\,.
\end{equation}
On the other hand,
\begin{align}
    \norm{z_1 - x^{\sf MDPS}_T} &\le \sqrt{(-c v[2] + v[1]x[2] + O(R\sigma v[2]))^2 + R^2v[1]^2\sigma^2\log^2(1/\sigma)} \\
    &\le O(Rv[1]\sigma\log(1/\sigma)) + v[1]x[2]\,.
\end{align}
Provided that $x[2] \gtrsim R\sigma\log(1/\sigma) v[1]^2/v[2]^2$, we get a contraction from $\norm{z_1 - x}$ to $\norm{z_1 - x^{\sf MDPS}_T}$. Note that in particular, as $\sigma \to 0$, the factor of contraction tends towards $v[1]^2$ as claimed.
\end{proof}

\section{Outlook}
In this work, we introduce a simple and effective technique, \textsc{ReGuidance}, to boost the performance of diffusion-based inverse problem solvers in both realism and reward. It takes an initial reconstruction as input, runs the unconditional probability flow ODE in reverse to obtain a latent, and runs DPS from that point to get an output. 

Along the way we also provide the first theoretical guarantees for diffusion posterior sampling~\cite{chung2022improving}. We leave as an interesting future direction to extend these guarantees to richer families of data distributions.

While we focused on reward guidance in the context of inverse problem solving, our technique is well-defined for other choices of rewards. An immediate future direction is to evaluate the efficacy of \reguidance for other choices of reward. In this work we identified difficult inverse tasks/reward objectives by focusing on the axis of signal-to-noise ratio, i.e., the amount of information about the original signal that is lost in the inverse problem, and observing that existing algorithms that are otherwise effective break down in these hard regimes. Other axes like reward multi-modality pose interesting challenges for future work.

\section*{Acknowledgments}

AK was supported by a PD Soros Fellowship. SC and AK were additionally supported by NSF Award IIS-2331831, an NSF CAREER (CCF-2441635), and the Harvard Dean's Competitive Fund for Promising Scholarship. KS was supported by the NSF AI Institute for Foundations of Machine Learning (IFML). The authors would like to thank Yilun Du for helpful suggestions and comments on this manuscript.

\bibliography{refs}
\bibliographystyle{alpha}

\newpage
\appendix

\paragraph{Roadmap for appendix.} 

\begin{itemize}[leftmargin=*]
    \item Appendix~\ref{app:fail}: For completeness, we prove the folklore result that DPS provably fails to sample from the correct tilted density even for very simple data distributions and measurements. To our knowledge, this has not yet appeared in written form in this literature.

    \item Appendix~\ref{app:reward}: We give deferred proofs for our first and second main theoretical results. Theorem~\ref{thm:informal_projection} shows in a stylized setting that \textsc{ReGuidance} projects the initial reconstruction to the manifold of maximal reward. Theorem~\ref{thm:sde} shows that if one replaces the DPS ODE in \textsc{ReGuidance} with the analogous SDE, it fails to achieve the desired property of boosting reward.
    
    \item Appendix~\ref{app:realism}: We give deferred proofs for our third and most technically involved theoretical result, Theorem~\ref{thm:informal_contract}, which shows in a stylized setting that even if the initial reconstruction achieves maximal reward, \textsc{ReGuidance} moves it towards one of the modes of the distribution and thus increases its likelihood. 
    
    \item Appendix~\ref{app:further_experiments}: We provide additional experimental results, including testing on alternate datasets, further results on superresolution, and visual examples of reconstructions generated using \textsc{ReGuidance}.
\end{itemize}

\section{DPS fails to sample from the tilted density}
\label{app:fail}


In this section we provide a proof of the folklore result that DPS provably fails to sample from the correct tilted density even for very simple data distributions and measurements, and even with perfect score estimation. We essentially prove that the KL divergence between the distribution of the DPS algorithm and the correct tilted density is large. 

Consider running the reverse SDE corresponding to the correct SDE and SDE given by the DPS:
\begin{align}
    \dd x_t &= (x_t + 2 \nabla \ln q_{T-t}(x_t) + 2 \nabla \ln q_{T-t}(y | x_t) ) \dd t \; + \; \sqrt{2} \dd B_t \label{eq:correct-conditional-sde} \\
    \dd \dpsx_t &= (\dpsx_t + 2 \nabla \ln q_{T-t}( \dpsx_t ) + 2 \dpsv_{T-t}( \dpsx_t ) ) \dd t \; + \; \sqrt{2} \dd B_t \label{eq:dps-sde} .
\end{align}

We first show that the drift terms of eq.\eqref{eq:correct-conditional-sde} and eq.\eqref{eq:dps-sde} are different when $q(x) = \normal(x; 0, \Id)$ and $y = x + \sigma z$ where $z \sim \normal(0, \Id)$. Observe that $\nabla \ln q_{T-t}(x) = -x$ for all $t$. Using $x_t = e^{-t} x_0 + \sqrt{1 - e^{-2t}} z$ for $z \sim \normal(0, \Id)$, the joint distribution of $[x_0, x_t]$ is given by
\begin{align*}
    p\begin{pmatrix}
        x_0 \\ x_t
    \end{pmatrix} = \normal \Bigg( \begin{pmatrix}
        0 \\ 0
    \end{pmatrix}, \begin{pmatrix}
\Id & e^{-t} \Id \\
e^{-t} \Id & \Id
\end{pmatrix} \Bigg).
\end{align*}
The conditional probability $p(x_0 | x_t)$ is a Gaussian distribution with mean $e^{-t} x_t$ and covariance $(1-e^{-2t}) \Id$. Using $p(y | x_0) \propto \exp( - \frac{ \| y - x_0 \|^2 }{ 2 \sigma^2 } ) $ and $p(x_0 | x_t) \propto \exp( -\frac{\| x_0 - e^{-t} x_t \|^2}{2(1 - e^{-2t})})$, the probability density $p(y | x_t) = \int p(x_0 | x_t) p(y | x_0) \dd x_0 \sim \normal(y; e^{-t} x_t, \sigma^2 + 1 - e^{-2t})$. Using this, the score function 
\begin{align*}
\nabla \ln q_{T-t}(y | x_t) = e^{-t} (y - e^{-t} x_t) / (\sigma^2 + 1 - e^{-2t}).
\end{align*}
When $q(x) = \normal(x; 0, \Id)$, $\E[x_0 | x_t] = e^{-t} x_t$. In this case, the DPS term in \eqref{eq:dps-sde} is given by
\begin{align*}
    \dpsv_{T-t}(x) = \nabla \ln q_{T-t}(y \; | \; \E[x_0 | x_t=x]) = e^{-t} (y - e^{-t} x) / \sigma^2.
\end{align*}
This proves that the two drift terms of the correct conditional SDE \eqref{eq:correct-conditional-sde} and DPS SDE \eqref{eq:dps-sde} are different. Now, to prove that these two SDEs result in different distributions, we use Girsanov's theorem. We first check Novikov's condition required for Girsanov's theorem. First, observe that $\| \dpsv_{T-t}(x) - \nabla \ln q_{T-t}(y | x_t)  \| = (1-e^{-2t}) e^{-t} \| y - e^{-t} x_t \| / (\sigma^2 (\sigma^2 + 1 - e^{-2t}))$. We also have $p(x_t | y) = \normal(x_t; e^{-t} y / (\sigma^2 + 1), (\sigma^2 + 1 - e^{-2t})/(\sigma^2 + 1))$. By rewriting $x_s \sim p(x_s | y)$ in terms of standard Gaussian $z$, we obtain 
\begin{align*}
    \| \dpsv_{T-t}(x) - \nabla \ln q_{T-t}(y | x_t)  \| &= \frac{(1-e^{-2t}) e^{-t} \| y - e^{-t} x_t \|}{ \sigma^2 (\sigma^2 + 1 - e^{-2t}) } \\ 
    &\leq \frac{ \| y \| + \| (\sigma^2 + 1 - e^{-2t})^{0.5} /(\sigma^2 + 1)^{0.5} z \|}{ \sigma^2 (\sigma^2 + 1 - e^{-2t}) }
\end{align*}
Using this upper bound, we can check that Novikov's condition holds when $T \leq \sigma^8 / 8$: 
\begin{align*}
    & \E_{x_s \sim q_{T-s}(\cdot | y)} \Big[ \exp \Big( 2 \int_0^T \| \dpsv_{T-s}(x_s) - \nabla \ln q_{T-s}(y | x_s) \|^2 \dd s \Big) \Big] \\
    & \leq \E_{z \sim \normal(0, \Id)} \Big[ \exp \Big( 4 T \Big( \frac{\| y \|^2}{ \sigma^8 } + \frac{ \| z \|^2  }{\sigma^8} \Big) \Big) \Big] < \infty 
\end{align*}
Let $\dpsQ_T(x | y)$ and $Q_T(x | y)$ be the path measure of the DPS algorithm \eqref{eq:dps-sde} and correct conditional SDE \eqref{eq:correct-conditional-sde}, respectively. Then, using Girsanov's theorem, for a fixed $y$, the KL divergence between 
\begin{align*}
    KL \big(Q_T(x | y) || \dpsQ_T(x | y) \big) &= \E_{Q_T(x | y)} \int_0^T \| \dpsv_{T-t}(x) - \nabla \ln q_{T-t}(y | x_t) \|^2 \dd t \\
    &= \E_{Q_T(x | y)} \Big[ \int_0^T \frac{(1-e^{-2t})^2 e^{-2t} \| y - e^{-t} x_t \|^2 }{ \sigma^4 (\sigma^2 + 1 - e^{-2t})^2 } \dd t \Big] \\
    &= \int_0^T \frac{ (1-e^{-2t})^2 e^{-2t} }{ \sigma^4 (\sigma^2 + 1 - e^{-2t})^2 } \Big( (1 - \frac{e^{-2t}}{ \sigma^2 + 1 }) \| y \|^2 + d \frac{e^{-2t} (\sigma^2 + 1 - e^{-2t}) }{ \sigma^2 + 1 } \Big) \dd t
\end{align*}
By changing variable using $x = e^{-2t}$, we obtain 
\begin{align*}
    KL \big(Q_T(x | y) || \dpsQ_T(x | y) \big) &\geq \int_{e^{-2T}}^1 \frac{ (1-x)^2 }{ 2 \sigma^4 (\sigma^2 + 1)^2 } ( \| y \|^2 + d x ) \dd t \geq \frac{\| y \|^2 (1 - e^{-2T})^3 }{6 \sigma^4 (\sigma^2 + 1)^2 }
\end{align*}

\section{Deferred proofs from Section~\ref{sec:model}}
\label{app:reward}
\subsection{Proof of Theorem~\ref{thm:informal_projection}}
\label{sec:thm1proof}

We state and prove the following formal version of Theorem~\ref{thm:informal_projection}, which states that when initialized at a point which does not achieve maximal reward, \textsc{ReGuidance} will project that point to the subspace of points that achieve maximal reward.

\begin{theorem}\label{thm:project}
    Let $q$ be the uniform mixture of identity-covariance Gaussians centered at all $2^d$ points in $\{R,-R\}^d$, and let $A\in \{0,1\}^{m\times d}$ be an inpainting measurment, i.e. a Boolean matrix with exactly one nonzero entry in each row. Suppose we observe the measurement $y = Ax^*$, where $x^* \in \{R,-R\}^d$. Let $x^{\sf DPS}_T$ denote the output of \textsc{ReGuidance} with guidance strength $\rho = 1/\sigma^2$ and time $T$ starting from initial reconstruction $x$. Then $\norm{\Pi x - x^{\sf DPS}_T} \le {\mathrm{poly}(d, R, \sigma, e^{-T})}$, where $\Pi$ is the projection to the affine subspace of all $x'$ for which $Ax' = y$.
\end{theorem}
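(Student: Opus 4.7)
The plan is to exploit two structural properties of the setup: the data distribution $q$ factorizes as a product of 1D mixtures across coordinates (since the centers $\{R,-R\}^d$ form a product set and all components share the identity covariance), and the inpainting matrix $A$ is coordinate-selecting. Together these imply that $\nabla \ln q_t$, the denoiser $\mu_t$, and the diagonal Jacobian $\nabla\mu_t$ all act coordinate-wise, so the DPS velocity $\nabla\mu_t(x)^\top A^\top(y - A\mu_t(x))$ also decouples across coordinates. Both Step 1 (reverse unconditional PF ODE) and Step 2 (DPS-ODE) of \reguidance{} therefore reduce to $d$ independent 1D ODEs.

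For each unobserved coordinate $\ell \not\in I \triangleq \{i_1,\ldots,i_m\}$, the DPS contribution vanishes identically, so Step 2 runs the forward unconditional PF ODE and exactly inverts Step 1 under the exact-simulation assumption. This gives $x^{\sf DPS}_{T,\ell} = x_\ell = (\Pi x)_\ell$, reducing the target bound to $\sum_{\ell \in I}(x^{\sf DPS}_{T,\ell} - y_\ell)^2$. For each observed coordinate, the 1D ODE reads
\begin{equation}
    \dot z_t = Re^{-(T-t)}\tanh(Re^{-(T-t)}z_t) + \rho\,\mu'_{T-t}(z_t)\,(y_\ell - \mu_{T-t}(z_t))\,, \qquad z_0 = x^*_{T,\ell}\,.
\end{equation}
I would analyze this through the residual $\phi_t \triangleq y_\ell - \mu_{T-t}(z_t)$, which (by the chain rule) satisfies an inhomogeneous linear ODE $\dot\phi_t = -\rho(\mu'_{T-t}(z_t))^2\phi_t + S(t)$, where $S(t)$ collects the $\partial_s\mu_s$ and $\tanh$ contributions. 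A short check shows that the DPS drift provides a restoring force of order $-\rho e^{-2(T-t)}z_t$ whenever $|z_t| \gtrsim R$, so $|z_t|$ remains $O(R)$ throughout the trajectory and consequently $|S(t)| \le \mathrm{poly}(R)$.

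Using the pointwise bound $\mu'_s(z) \ge e^{-s}$, the accumulated decay exponent satisfies $\int_0^T \rho(\mu'_{T-t}(z_t))^2\,dt \ge \rho(1 - e^{-2T})/2 = \Omega(1/\sigma^2)$, so variation of parameters gives
\begin{equation}
    |\phi_T| \le |\phi_0|\,e^{-\Omega(1/\sigma^2)} + \mathrm{poly}(R)\int_0^T e^{-\rho(1 - e^{-2(T-t)})/2}\,dt\,.
\end{equation}
Splitting the second integral at $T - t = 1$ into the bulk (contributing $\le \mathrm{poly}(R)\cdot e^{-\Omega(1/\sigma^2)}$) and the boundary layer $T - t \le 1$ (where $1 - e^{-2(T-t)} \ge T-t$, so the contribution is $\lesssim \int_0^1 e^{-\rho s}\,ds \lesssim \sigma^2$) yields $|\phi_T| \lesssim \sigma^2\,\mathrm{poly}(R)$. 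Since $\mu_0(z) = z$, this translates to $|z_T - y_\ell| \lesssim \sigma^2\,\mathrm{poly}(R)$, and summing over the $\le d$ observed coordinates gives the claimed $\mathrm{poly}(d, R, \sigma, e^{-T})$ bound, with the $e^{-T}$ slack absorbing any finite-$T$ mismatch in the initial value of $\phi_0$ coming from the fact that $\mu_T$ is only approximately the prior mean. The main obstacle is this 1D analysis: both the contraction rate $\rho(\mu')^2$ and the source $S$ depend on the running trajectory $z_t$, so one must simultaneously bound $|z_t|$ a priori and execute the boundary-layer calculation; once those are in place, the remaining ingredients are coordinate-wise bookkeeping.
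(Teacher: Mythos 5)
Your proposal follows the same skeleton as the paper's proof: both exploit the product structure of $q$ and the coordinate-selecting form of $A$ to decouple the dynamics into $d$ independent one-dimensional ODEs, both observe that on unmeasured coordinates Step 2 exactly inverts Step 1 so those coordinates return to $x_\ell = (\Pi x)_\ell$, and both establish contraction on measured coordinates via a Gronwall/integrating-factor argument driven by the $1/\sigma^2$-strength restoring force. The one genuine difference is the quantity you track on measured coordinates: the paper works directly with $r_t = |y_\ell - z_t|$, which forces it to decompose $y_\ell - \mu_{T-t}(z_t)$ as $(y_\ell - z_t) + (\text{error } E_t)$ and bound $E_t$ using $|z_t| \le r_t + |y_\ell|$ inside the Gronwall inequality; you instead track the denoiser residual $\phi_t = y_\ell - \mu_{T-t}(z_t)$, for which the DPS drift contributes an exactly linear damping $-\rho(\mu'_{T-t})^2\phi_t$ and the conversion back to $|y_\ell - z_T|$ is free since $\mu_0 = \mathrm{id}$. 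This is arguably cleaner (no $e^{+T}$ growth factor in the homogeneous solution), at the price of needing an a priori trajectory bound to control the source $S(t)$.

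That a priori bound is the one place where your argument as written is imprecise. The claim that the DPS drift is restoring ``whenever $|z_t|\gtrsim R$'' is not right: the sign of the DPS term is governed by $y_\ell - \mu_{T-t}(z_t)$, and since $\mu_{T-t}(z)\approx e^{-(T-t)}z$ up to a bounded $\tanh$ term, the restoring force only activates when $e^{-(T-t)}|z_t|\gtrsim R$. When $T-t$ is large, $|z_t|$ can greatly exceed $R$ (e.g.\ if $|x_\ell|$ is large) with no restoring force at all --- but in that regime the entire drift is $O(e^{-(T-t)}\,\mathrm{poly}(R)/\sigma^2)$ and $z_t$ barely moves. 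The statement you actually need, and can prove by a barrier argument on $w_t \triangleq e^{-(T-t)}z_t$, is $e^{-(T-t)}|z_t| \le O(R + e^{-(T-t)}|z_0|)$, which suffices because $|S(t)| \lesssim e^{-(T-t)}|z_t| + \mathrm{poly}(R)$ only involves $z_t$ through this rescaled quantity. Two smaller fixes: the inequality $1-e^{-2u}\ge u$ fails for $u$ near $1$ (split the boundary layer at $T-t=1/2$ instead), and the bulk term $T e^{-\Omega(1/\sigma^2)}$ requires a mild constraint tying $\sigma$ to $T$ (the paper assumes $\sigma^2 \le 1/T$ explicitly); neither affects the conclusion.
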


\begin{proof}[Proof of Theorem~\ref{thm:project}]
Recall that starting from an initial construction $x$, the \reguidance \; algorithm runs the following ODE for $T$ duration:

\begin{equation*}
    \mathrm{d}x^{\mathsf{DPS}}_t = \left(x^{\mathsf{DPS}}_t + \nabla \ln q_{T - t}(x^{\mathsf{DPS}}_t) + \rho \nabla \denoise_{T - t}(x^{\mathsf{DPS}}_t) A^\top (y - A \denoise_{T - t}(x^{\mathsf{DPS}}_t))\right)\,\mathrm{d}t, \ \ \
    x^{\mathsf{DPS}}_0 = x.
\end{equation*}

Consider the uniform mixture of identity-covariance Gaussians centers at the points $\{ R, -R \}^d$ for a parameter $R > 0$. For this distribution, the score function at noise scale $t$ is given by
\begin{align*}
    \nabla \log q_t(x) = \sum_{j=1}^d (-x_j + Re^{-t} \tanh(R e^{-t} x_j) ) e_j
\end{align*}
where $\{e_1, \ldots, e_d \}$ are standard basis vector in $\mathbb R^d$.  In other words, in every direction, the score function is $-x + R e^{-t} \tanh(R e^{-t} x)$. The drift of the ODE can be decomposed into two parts: unconditional drift, denoted by $U_{T-t}(x_t)$, and conditional DPS term, denoted by $\dpsv_{T-t}(x_t)$. The unconditional and conditional term for $i^{th}$ coordinate is given by
\begin{align*}
    U_{T-t}(x)_i &= -x_i + R e^{-(T-t)} \tanh(R e^{-(T-t)} x_i) \\
    \dpsv_{T-t}(x)_i &= \Bigl[ \Bigl(e^{-(T-t)}\Id + (1 - e^{-2(T-t)}) e^{-(T-t)} R^2 \diag(\sech^2(Re^{-(T-t)}x))\Bigr)\cdot \frac{A^\top (y - A\mu_{T-t}(x))}{\sigma^2} \Bigr]_i \\
    &= \begin{cases}
        e^{-(T-t)}(1 + (1 - e^{-2(T-t)}) R^2 \sech^2(Re^{-(T-t)}x) ) \frac{(y - \mu_{T-t}(x))}{ \sigma^2 } \hspace{1cm} & \text{if} \; e_i \text{ is in subspace of } A\\ 
        0 & \text{otherwise}
    \end{cases} 
\end{align*}

We refer to coordinates that are observed/measured as `measured coordinates' and `unmeasured coordinates' if they are not observed. We first focus on the analysis of `measured coordinates'. Let $i$ be such a coordinate and $y_i$ be the measurement of it. For brevity, we will drop the superscript $\textrm{DPS}$ and the coordinate index $i$ from the subscript as the analysis applies to any measured coordinate. For these coordinates, we track the evolution of error $\dd ( y - x_t )^2 / \dd t$ as follows:
\begin{align}
\label{eq:change-in-error}
    \frac{\dd (y - x_t)^2}{ \dd t } = -2 (y - x_t) \frac{\dd x_t}{\dd t}
\end{align}

    We first rewrite $(y - \mu_{T-t}(x))$ as follows:
    \begin{align*}
        y - \mu_{T-t}(x) = (y - x) + (x - e^{-(T-t)} x) - (1 - e^{-2(T-t)}) R \tanh( R e^{-(T-t)} x )
    \end{align*}
    Using this, we can rewrite $\dpsv_{T-t} (x_t) = e^{-(T-t)} \Big( \frac{1}{\sigma^2} + 
\frac{ (1 - e^{-2(T-t)}) R^2 \sech^2(Re^{-(T-t)}x) }{ \sigma^2 } \Big) (y - x_t) + E_t$ where the additional term $E_t$ is given by 
    \begin{align*}
        E_t = & \; e^{-(T-t)} \frac{ \big( (x - e^{-(T-t)} x) - (1 - e^{-2(T-t)}) R \tanh( R e^{-(T-t)} x ) \big) }{ \sigma^2 } \\ 
        & + e^{-(T-t)}(1 - e^{-2(T-t)}) R^2 \sech^2(Re^{-(T-t)}x) \frac{(x - \mu_{T-t}(x))}{ \sigma^2 }
    \end{align*}
    Combining this with \eqref{eq:change-in-error}, we obtain 
    \begin{align*}
        \frac{\dd (y - x_t)^2}{ \dd t } = - 2e^{-(T-t)} \Big( \frac{1}{\sigma^2} + 
\frac{ (1 - e^{-2(T-t)}) R^2 \sech^2(Re^{-(T-t)}x) }{ \sigma^2 } \Big) (y - x_t)^2 - 2 (y - x_t) (U_t + E_t) 
    \end{align*}
    It is easy to prove that $| U_t | \leq R$ and the upper bound on $| E_t |$ is given by
    \begin{align*}
        | E_t | \leq \frac{ 2 R^2 e^{-(T-t)} }{ \sigma^2 } (1 - e^{-(T-t)}) | x_t | + \frac{2 R^3 e^{-(T-t)} }{ \sigma^2 } (1 - e^{-2(T-t)}) + e^{-(T-t)} (1 - e^{-2(T-t)}) | y - x_t |
    \end{align*}
    Let $r_t$ be equal to $|y - x_t|$. Then, using $|x_t| \leq r_t + | y |$ and $-\sech^2(R e^{-(T-t)} x) \leq 0$, we obtain 
    \begin{align*}
        & \frac{\dd r_t^2}{ \dd t } \leq a(t) r_t + b(t) \\
        \text{where} & \; a(t) = -\frac{2 e^{-(T - t)}}{\sigma^2} + (1 - e^{-2(T - t)}), \\
        & b(t) = R + \frac{2 R^3}{\sigma^2} (1 - e^{-2(T - t)}) + \frac{2 R^2}{\sigma^2} (1 - e^{-(T - t)}) \| y \|.
    \end{align*}
    We now want to apply Gronwall's inequality to obtain the found on $r_T$. The inequality in this case gives
    \[
        r_T \le r_0 \cdot \exp\left( \int_0^T a(s) ds \right) + \int_0^T \exp\left( \int_s^T a(u) du \right) b(s)\, ds.
    \]
    We change variables by letting \( u = T - s \), then \( ds = -du \). In this case, the first integral becomes
    \[
        \int_0^T a(s)\, ds = \int_0^T \left( -\frac{2 e^{-u}}{\sigma^2} + 1 - e^{-2u} \right) du = -\frac{2}{\sigma^2}(1 - e^{-T}) + T - \frac{1}{2}(1 - e^{-2T})
    \]
    For the second integral, we have
    \[
        \int_0^T \exp\left( \int_s^T a(u)\, du \right) b(s)\, ds = \int_0^T \exp\left( -\frac{2}{\sigma^2}(1 - e^{-s}) + s - \frac{1}{2}(1 - e^{-2s}) \right) b(s) \, ds
    \]
    Since \( b(s) \) is decreasing in \( s \), we can upper bound it in the above integral by \( b(0) \):
    Putting everything together, we obtain:
    \[
        |y - x_T| \leq |y - x_0| \cdot \Phi(T) + b(0) \cdot \int_0^T \Phi(s)\, ds
    \]
    where 
    \[
        \Phi(t) := \exp\left( -\frac{2}{\sigma^2}(1 - e^{-t}) + t - \frac{1}{2}(1 - e^{-2t}) \right).
    \]
    For sufficiently small $\sigma^2 \leq 1/T$, we obtain that for every measured coordinate, we have $|y - x_T| \leq \textrm{poly}(R, \sigma, e^{-T})$. For every unmeasured coordinate, the velocity field remains the same as the unconditional score function. Therefore, the unmeasured coordinates converge to the same value as the initial reconstruction after \reguidance. \; Combining these two claims, we obtain the result.  
\end{proof}

\subsection{Proof of Theorem~\ref{thm:sde}}
\label{app:sde}

In this section, we state and prove the following formal version of Theorem~\ref{thm:sde}, which states that the guarantees of \textsc{ReGuidance} from Theorem~\ref{thm:informal_projection} do not hold if one replaces the ODE with an SDE:
\begin{theorem}\label{thm:sde_formal}
    Let $q$ be the uniform mixture of identity-covariance Gaussians centered at all $2^d$ points in $\{R,-R\}^d$, and let $A\in\{0,1\}^{m\times d}$ be an inpainting measurement, i.e. a Boolean matrix with exactly one nonzero entry in each row. Let $y = Ax$ be the value of the measurement for some $x\in\{R,-R\}^d$. If \textsc{ReGuidance} is run on the (perfect) initial reconstruction $x$ but using the SDE instead of the ODE, then the result $x^{\sf DPS}_T$ is independently distributed as $\frac{1}{2}\mathcal{N}(R,1) + \frac{1}{2}\mathcal{N}(-R,1)$, up to $\mathrm{exp}(-\Omega(T))$ statistical error, on all other coordinates.
\end{theorem}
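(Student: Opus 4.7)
The plan is to exploit the tensor-product structure of the data distribution to reduce to a one-dimensional problem, and then invoke the exponential mixing of the Ornstein-Uhlenbeck process to show that the unconditional reverse SDE essentially forgets its initial condition---making precise the ``memoryless'' intuition alluded to in Section~\ref{sec:sde}.

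First, I would observe that the base measure factorizes as $q = (q^{(1)})^{\otimes d}$ where $q^{(1)} = \tfrac{1}{2}\mathcal{N}(R,1) + \tfrac{1}{2}\mathcal{N}(-R,1)$, so the score $\nabla \ln q_t$ and the denoiser Jacobian $\nabla \mu_t$ are both diagonal. Since $A$ is an inpainting matrix, $A^\top A$ is the orthogonal projection onto $\mathrm{span}\{e_{i_1},\ldots,e_{i_m}\}$, so for any unmeasured coordinate $\ell \notin \{i_1,\ldots,i_m\}$ the DPS term $\nabla \mu_t\cdot A^\top(y - A\mu_t(x))$ has zero $\ell$-th component. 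Projected to coordinate $\ell$, both the reverse probability-flow ODE in Step~1 and the DPS-SDE in Step~2 therefore reduce to the one-dimensional unconditional dynamics of $q^{(1)}$, and the joint process on unmeasured coordinates factors as the product of $d-m$ independent one-dimensional reverse SDEs, each initialized at the deterministic scalar $x^*_T[\ell]$ obtained from the scalar $x[\ell]\in\{R,-R\}$.

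Second, I would use the standard identity that the unconditional reverse SDE initialized at a deterministic point $x^*_T$ samples from the conditional law $p(x_0 \mid x_T = x^*_T)$ under the forward OU process started at $q^{(1)}$. By Bayes' rule,
\begin{equation}
    p(x_0\mid x_T = x^*_T) \;\propto\; q^{(1)}(x_0)\cdot \exp\!\left(\frac{e^{-T}x^*_T\, x_0 - \tfrac{1}{2}e^{-2T}x_0^2}{1 - e^{-2T}}\right).
\end{equation}
The reverse probability-flow drift $-Re^{-t}\tanh(Re^{-t}x^*_t)$ has magnitude at most $Re^{-t}$, so $|x^*_T[\ell]| \leq |x[\ell]| + \int_0^T Re^{-t}\,\mathrm{d}t \leq 2R$ a priori. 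Combined with the sub-Gaussianity of $q^{(1)}$ on scale $O(R)$, the multiplicative correction above is $1 + O(e^{-T} R^2)$ uniformly over the $q^{(1)}$-bulk (the tails contribute negligibly because $e^{-x_0^2/2}$ dominates the linear exponent $e^{-T}x^*_T x_0$), giving $\|p(\cdot \mid x^*_T[\ell]) - q^{(1)}\|_{\rm TV} = O(e^{-T}R^2)$ per coordinate. Tensorizing across the $d-m$ independent unmeasured coordinates via the standard TV tensorization inequality yields total error $O((d-m) e^{-T}R^2) = e^{-\Omega(T)}$ for $T$ sufficiently large relative to $\log d$ and $\log R$.

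The main obstacle, although modest in this setting, is that the Bayes argument must hold uniformly in the latent $x^*_T$. This is why the explicit deterministic a priori bound $|x^*_T[\ell]| \le 2R$ coming from the tanh form of the one-dimensional score is an essential ingredient: if the data distribution had heavier tails or the forward process were not globally contractive, one would need a more delicate tail argument to control the posterior tilt uniformly in the extracted latent.
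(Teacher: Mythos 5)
Your proposal is correct and follows essentially the same route as the paper: decouple the dynamics coordinate-wise, observe that the DPS term vanishes on unmeasured coordinates so the DPS-SDE reduces there to the unconditional reverse SDE, and use the fact that this SDE started from a deterministic latent samples the conditional law $p(x_0 \mid x_T = \cdot)$, which is exponentially close in $T$ to the prior $\tfrac{1}{2}\mathcal{N}(R,1)+\tfrac{1}{2}\mathcal{N}(-R,1)$. The only differences are cosmetic: you make the last step quantitative via an explicit Bayes computation with the a priori bound $|x^*_T[\ell]|\le 2R$ and a TV tensorization (the paper simply asserts exponential decay of the per-coordinate KL), and you correctly initialize the per-coordinate SDE at $x^*_T[\ell]$ where the paper's write-up loosely says $x[i]$.
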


\begin{proof}
    Recall from the previous section that the DPS term $v^{\sf DPS}_t$ is zero on the unmeasured coordinates, i.e. the coordinates corresponding to zero columns of $A$. Furthermore, as observed in the previous section, the dynamics of the DPS-SDE decouple across coordinates. In the unmeasured coordinates, the velocity field is simply given by the \emph{unconditional score} for the mixture $\frac{1}{2}\mathcal{N}(R,1) + \frac{1}{2}\mathcal{N}(-R,1)$ marginalized to that coordinate. That is, it is given by the SDE
    \begin{equation}
        \mathrm{d}x_t = (-x_t + 2 Re^{-(T-t)}\tanh(Re^{-(T-t)}x_t))\,\mathrm{d}t + \sqrt{2}\mathrm{d}B_t\,.
    \end{equation}
    This SDE is initialized at the $i$-th coordinate of the vector $x$, which we denote by $x[i]$, and the final iterate $x_T$ is a sample from the posterior distribution over $x \sim \frac{1}{2}\mathcal{N}(R,1) + \frac{1}{2}\mathcal{N}(-R,1)$ conditioned on $e^{-T}x + \sqrt{1 - e^{-2T}}g = x[i]$ for $g\sim \mathcal{N}(0,\Id)$. The KL divergence between this posterior and $\mathcal{N}(0,\Id)$ decays exponentially with $T$, as claimed.
\end{proof}

\section{Deferred proofs from Section~\ref{sec:realism}}
\label{app:realism}

\subsection{Proof of Lemma~\ref{lem:init}}

    The unconditional probability flow ODE in reverse time is given by
    \begin{equation}
        \mathrm{d}x^*_t = -Re^{-t}\tanh(Re^{-t}x^*_t[1])\,\mathrm{d}t
    \end{equation}
    First, note that the velocity field does not depend on $x^*_t[2]$, so $x^*_T[2] = x[2]$. Next, note that the unconditional probability flow ODE in \emph{forward time} is monotone in the sense that it ends up at a point whose first coordinate is the same sign as that of its initialization. The fact that $c$ in the claim has the same sign as $x[1]$ follows by reversing time.

\subsection{Proof of Lemma~\ref{lem:main}}
\label{app:main_lemma}

In preparation for the analysis, define
\begin{equation}
    \epsilon = 4\sigma^2 \label{eq:epsdef}
\end{equation}
and define the stopping times
\begin{itemize}
    \item $T^*$:  first time that $\langle x'_t, v\rangle > \epsilon R v[1]/2$. 
    \item $T_1 = T - \frac{1}{2}\log(1/\delta)$ for $\delta \triangleq \frac{3\sigma^2}{Rv[1]}$
    \item $T'_1 = T_1 + \Theta(\log \log 1/\epsilon)$
    \item $T_2 = T - \log(1/\sigma)$
\end{itemize}

\subsubsection{Stage 1: $t = 0$ to $t = T'_1 = T - O(\log(R/\sigma))$}

While $\tanh(Rx'_t[1]) \le 1/2$ and $x'_t[1]\ge 0$, we can lower bound the drift in the $e_1$ direction by
\begin{equation}
    x'_t[1] + \frac{e^{-2(T-t)}}{\sigma^2} (R v[1]/2 - \langle x'_t, v\rangle)\,.
\end{equation}
Then for all $t \le T^*$, by integrating we conclude that \begin{equation}
    x'_t[1] \ge x'_0[1] e^t + \frac{(1 - e^{-t})e^{-2(T-t)}}{4\sigma^2}\cdot R v[1]\,.    
\end{equation}
So provided that $T^* > T_1$, by our choice of $T_1$ above we have that $\tanh(x'_{T_1}[1]) > 1/2$. 

Next, for all $T_1 \le t \le T^*$, we will lower bound the drift in the $e_1$ direction by $x'_t[1]$, yielding the naive bound of $x'_t[1] \ge x'_{T_1} e^{t-T_1}$. Provided that $T^* > T'_1$, by our choice of $T'_1$ above we have that $\tanh(x'_{T'_1}[1]) \ge 1 - \epsilon$.

Next, let us consider the drift in the $v$ direction, which can be upper bounded by
\begin{equation}
    \Bigl(1 - \frac{e^{-2(T-t)}}{\sigma^2}\Bigr) \langle x'_t,v\rangle + Re^{-2(T-t)}v[1]\cdot (1 + 1/\sigma^2)\,.
\end{equation}
Furthermore, for all $t\le T_2$, the drift in the $v$ direction is nonnegative. So by integrating the above upper bound, we conclude that for $\delta'$ defined as in Eq.~\eqref{eq:deltaprime_def}, we have
\begin{align}
    \langle x'_{T'_1}, v\rangle &\le e^{-\frac{\delta' + e^{-2T}}{2\sigma^2}}\sqrt{\delta'}\cdot e^T\langle x'_0, v\rangle + \sqrt{\frac{\pi}{2}} Rv[1]\sqrt{\delta'} (\sigma + \sigma^{-1})  \Bigl(\erfi\Bigl(\sqrt{\frac{\delta'}{2\sigma^2}}\Bigr) - \erfi\Bigl(\frac{e^{-T}}{\sigma\sqrt{2}}\Bigr)\Bigr) \label{eq:intermed_stage1} \\
    &\lesssim \sigma\sqrt{\frac{\log(1/\epsilon)}{R v[1]}}\cdot \langle x^{\sf MDPS}_0, v\rangle + \log(1/\epsilon) \le c'\sigma\sqrt{\log(1/\epsilon)Rv[1]} + \log(1/\epsilon)\,, \label{eq:stage1_bound}
\end{align}
for some small constant $c>0$,
where $\erfi(\cdot)$ denotes the imaginary error function, and where in the second step we used that $\erfi(\sqrt{\frac{\delta'}{2\sigma^2}}) \lesssim \sqrt{\frac{\delta'}{2\sigma^2}}$ as well as the definition of $x'_0 = e^{-T}x^{\sf MDPS}_0$, and in the third step we used the assumption that $\langle x^{\sf MDPS}_0, v\rangle \le cRv[1]$ for sufficiently small absolute constant $c$.

\subsubsection{Stage 2: $t = T'_1$ to $t = T_2 = T - \log(1/\sigma)$}

Next we show that from time $T'_1$ to time $T_2$, $\langle x'_t,v\rangle$ steadily increases to $O(R\sigma)$.

First note that by definition of $T_2$, both $x'_t[1]$ and $\langle x'_t,v\rangle$ are nondecreasing for $0 \le t \le T_2$. Furthermore, at the end of the previous stage, we saw that $\tanh(x'_{T'_1}[1]) \ge 1 - \epsilon$, so $\tanh(x'_t[1]) \ge 1 - \epsilon$ for all $T'_1 \le t \le T_2$. As a result, the drift in the $v$ direction is upper bounded by
\begin{equation}
    \Bigl(1 - \frac{e^{-2(T-t)}}{\sigma^2}\Bigr)\langle x'_t, v\rangle + \Bigl(e^{-2(T-t)} + \frac{e^{-2(T-t)}(e^{-2(T-t)} + 4\sigma^2)}{\sigma^2}\Bigr)Rv[1]\,.
\end{equation}
Integrating this, we conclude that
\begin{align}
    \langle x'_{T_2}, v\rangle &\le Rv[1]\sigma^2 + e^{\delta'/2\sigma^2-1/2}\cdot \Bigl(\frac{\sigma}{\sqrt{\delta'}} \langle x'_{T'_1},v\rangle - Rv[1] \sigma\sqrt{\delta'}\Bigr) \\
    &\qquad\qquad\quad + 2\sqrt{2\pi\delta'} Rv[1]\sigma \Bigl(\mathrm{erfi}(1/\sqrt{2}) - \mathrm{erfi}(\sqrt{\delta/2\sigma^2}) \Bigr) \\
    &\le c''Rv[1]\sigma \,, \label{eq:stage2_bound}
\end{align}
where $\delta'$ is defined in Eq.~\eqref{eq:deltaprime_def}, and in the second step we used the bound on $\langle x'_{T'_1},v\rangle$ from Eq.~\eqref{eq:stage1_bound}, and $c''$ is a small absolute constant depending on $c$ in the assumed bound of $\langle x^{\sf MDPS}_0, v\rangle \le cRv[1]$.

As $\langle x'_t,v\rangle$ has been nondecreasing up to this point, and the final bound in Eq.~\eqref{eq:stage2_bound} is at most $\epsilon Rv[1]/2$ by our choice of $\epsilon$ in Eq.~\eqref{eq:epsdef}, we conclude that our running assumption that time $T^*$ happens after Stages 1 and 2 is valid.

\subsubsection{Stage 3: $t = T_2$ to $t = T$}

We now complete the analysis of the trajectory by considering $t \ge T_2$. Define $\epsilon_t \triangleq 1 - \tanh(Rx'_t[1])$ so that the drift in the $v$ direction can be written as 
\begin{equation*}
    \Bigl\langle \frac{\mathrm{d}x'_t}{\mathrm{d}t}, v\Bigr\rangle = \Bigl(1 - \frac{e^{-2(T-t)}}{\sigma^2}\Bigr)\langle x'_t, v\rangle + \Bigl(e^{-2(T-t)} + \frac{e^{-4(T-t)}}{\sigma^2}
    \Bigr)Rv[1] + \Delta_t
\end{equation*}
for 
\begin{equation}
    \Delta_t = \Bigl(\frac{e^{-2(T-t)}(1 - e^{-2(T-t)})}{\sigma^2} - e^{-2(T-t)}\Bigr) Rv[1] \epsilon_t\,.
\end{equation}
Writing $t = T_2 + s$, we can express the above as
\begin{equation}
    \Bigl\langle \frac{\mathrm{d}x'_{T_2 +s}}{\mathrm{d}s}, v\Bigr\rangle = (1 - e^{2s}) \langle x'_{T_2+s}, v\rangle + (e^{2s} + e^{4s})Rv[1]\sigma^2  + \Delta_{T_2+s}
\end{equation}
and
\begin{equation}
    \Delta_{T_2+s} = (1 - 2\sigma^2 e^{2s})Rv[1]\epsilon_{T_2+s}
\end{equation}
We will regard this as a perturbed version of the ODE
\begin{equation} \label{eq:selfconsistent}
    \frac{\mathrm{d}Z_s}{\mathrm{d}s} = (1 - e^{2s})Z_s + (e^{2s} + e^{4s})Rv[1]\sigma^2\,,
\end{equation}
which admits the solution
\begin{equation}
    Z_s = e^{2s}Rv[1]\sigma^2 + e^{1/2 - e^{2s}/2 + s} (Z_0 - Rv[1] \sigma^2) \triangleq f_s(Z_0)\,.
\end{equation}
Note that $|f'_s(Z_0)| = e^{1/2 - e^{2s}/2 + s}$, and $\int^\infty_0 e^{1/2 - e^{2s}/2 + s}\,\mathrm{d}s \le 2/3$. So by the Alekseev-Gr\"obner formula, if $Z_0 = \langle x'_{T_2},v\rangle$,
\begin{equation}
    \bigl|Z_s - \langle x'_{T_2 +s} ,v\rangle\bigr| = \Bigl|\int^s_0 f'_r(Z_r)\cdot \Delta_r\,\mathrm{d}r\Bigr| \le \frac{2}{3}\sup_{0\le r\le s}|\Delta_r|\,.
\end{equation}
Let $T_3(\xi)$ denote the smallest time $T_2 \le t \le T$ for which $\tanh(x'_t[1]) \le 1 - \xi$; if no such $t$ exists, let $T_3(\xi) \triangleq \infty$. Then for any $0 \le s \le T_3(\xi) - T_2$, we have
\begin{equation}
    \left|\langle x'_{T_2 +s}, v\rangle - \Bigl(e^{2s}Rv[1]\sigma^2 + e^{1/2 - e^{2s}/2 + s} (\langle x'_{T_2},v\rangle - R v[1]\sigma^2)\Bigr)\right| \le \frac{2}{3}Rv[1]\xi \label{eq:vdirection}
\end{equation}
Let us now track the evolution of $x'_{T_2+s}[1]$. We can always lower bound the drift in the $e_1$ direction by

\begin{align*}
    \MoveEqLeft x'_{T_2+s}[1] + (1 - \xi) R\sigma^2 e^{2s} + e^{2s}v[1](Rv[1]\xi + 0.99Rv[1] e^{2s}\sigma^2  - \langle x'_{T_2+s}, v\rangle) \\
    &\ge x'_{T_2+s}[1] + (1 - v[1])R\sigma^2 e^{2s} - \xi R\sigma^2 e^{2s} + 0.99R\sigma^2 e^{4s} v[1]^2 \\
    &\qquad \qquad - e^{1/2 - e^{2s}/2 + 3s} c''Rv[1]^2\sigma + \frac{1}{3}e^{2s}Rv[1]^2\xi\,,
\end{align*}
where in the second step we used Eq.~\eqref{eq:vdirection} and the bound Eq.~\eqref{eq:stage2_bound} established in Stage 2. In particular, for $\xi = C\sigma$ for sufficiently large constant $C$ relative to $c''$, we see that the above drift is nonnegative, implying that $\tanh(x'_t[1])$ can never go below $1 - O(\sigma)$. In particular, for this choice of $\xi$, $T_3(\xi) = \infty$.

We have thus established that Eq.~\eqref{eq:vdirection} holds for all times $0 \le s\le T - T_2$. In particular, at the end of the trajectory, i.e. when $s = T - T_2 = \log(1/\sigma)$, we can bound the final measurement loss by
\begin{equation}
    \bigl|\langle x'_T, v\rangle - Rv[1]\bigr| \lesssim  e^{-\frac{1}{2\sigma^2}}\cdot Rv[1]\sigma + Rv[1]\sigma\log(1/\sigma) \lesssim Rv[1]\sigma\log(1/\sigma)\,.
\end{equation}

\section{Additional experimental results}
\label{app:further_experiments}

In this section we report additional experimental results. In Section~\ref{sec:cifar}, we report similar improvements using \textsc{ReGuidance} on CIFAR-10 and provide visual examples comparing our method to baselines. In Section~\ref{sec:imagenet_appendix}, we provide additional details for our experiments with superresolution on ImageNet as well as visual examples of outputs of \textsc{ReGuidance} for both box in-painting and superresolution. In Section~\ref{sec:latent} we explore how the choice of latent affects the behavior of \textsc{ReGuidance} and provide visualizations for the space of good latents. Finally, in Section~\ref{sec:sde}, we validate the theoretical result of Theorem~\ref{thm:sde} by showing that empirically, \textsc{ReGuidance} using the SDE performs considerably worse compared to \textsc{ReGuidance} using the ODE.

\subsection{Results on CIFAR-10}
\label{sec:cifar}
\subsubsection{Empirical results}
To validate \textsc{ReGuidance} on an alternative dataset, we take our top performing baseline (DAPS) and run our experiments (baseline with and without \textsc{ReGuidance}) on the CIFAR-10 dataset~\cite{krizhevsky2009learning}, consisting of $32 \times 32$ images. We use the unconditional $32 \times 32$ diffusion model from \cite{dhariwal2021diffusion} as the base model. As before, the inverse tasks are different scales of box-inpainting and superresolution: now, small inpainting denotes a $16 \times 16$ region masked out from the original image, while in large inpainting, the masked region dimensions are $23 \times 23$. All other settings with respect to evaluation are the same as with ImageNet. We report the results below.

\begin{table}[ht]
    \centering
    \small                          
    \setlength{\tabcolsep}{6pt}     
    \renewcommand{\arraystretch}{1.08} 
    \begin{tabular}{l|cc|cc}
        \toprule
        \multirow{2}{*}{\textbf{Method}} &
        \multicolumn{2}{c|}{\textbf{Small Inpainting}} &
        \multicolumn{2}{c}{\textbf{Large Inpainting}} \\ 
        \cmidrule(lr){2-3} \cmidrule(lr){4-5}
        & LPIPS $\downarrow$ & CMMD $\downarrow$ 
        & LPIPS $\downarrow$ & CMMD $\downarrow$ \\ 
        \midrule
        DAPS                       & 0.147 & 0.519 & 0.300 & 0.636 \\
        DAPS + \reguidance         & {{0.151}} & \underline{\textbf{0.373}} & \underline{\textbf{0.298}} & \underline{\textbf{0.560}}  \\
        \bottomrule
    \end{tabular}
    \vspace{0.75em}
    \captionsetup{font=small}
    \caption{Experimental results for inpainting. Bold numbers show improvements over baseline, and underlined values mark the best result (not including last row which uses knowledge of ground truth).}
    \label{tab:cifar_inpainting}
\end{table}

\begin{table}[ht]
    \centering
    \small                          
    \setlength{\tabcolsep}{6pt}     
    \renewcommand{\arraystretch}{1.08} 
    \begin{tabular}{l|cc|cc}
        \toprule
        \multirow{2}{*}{\textbf{Method}} &
        \multicolumn{2}{c|}{\textbf{Small Superresolution}} &
        \multicolumn{2}{c}{\textbf{Large Superresolution}} \\ 
        \cmidrule(lr){2-3} \cmidrule(lr){4-5}
        & LPIPS $\downarrow$ & CMMD $\downarrow$ 
        & LPIPS $\downarrow$ & CMMD $\downarrow$ \\ 
        \midrule
        DAPS                       & 0.392 & 0.591 & 0.516 & 0.859 \\
        DAPS + \reguidance         & \underline{\textbf{0.383}} & \underline{\textbf{0.569}} & \underline{\textbf{0.509}} & \underline{\textbf{0.669}} \\
        \bottomrule
    \end{tabular}
    \vspace{0.75em}
    \captionsetup{font=small}
    \caption{Experimental results for superresolution. Bold numbers show improvements over baseline, and underlined values mark the best result (not including last row which uses knowledge of ground truth).}
    \label{tab:cifar_superresolution}
\end{table}

As shown in Tables \ref{tab:cifar_inpainting} and \ref{tab:cifar_superresolution}, the strong boost in performance offered by \textsc{ReGuidance} transfers to CIFAR 10. We see a near universal boost in performance across all tasks and metrics, especially seeing strong boosts in realism as quantified by the CMMD score. 

\subsubsection{Visual examples}

\begin{figure}[H]
    \centering
    \begin{subfigure}[t]{0.48\textwidth}
        \centering
        \includegraphics[width=\linewidth]{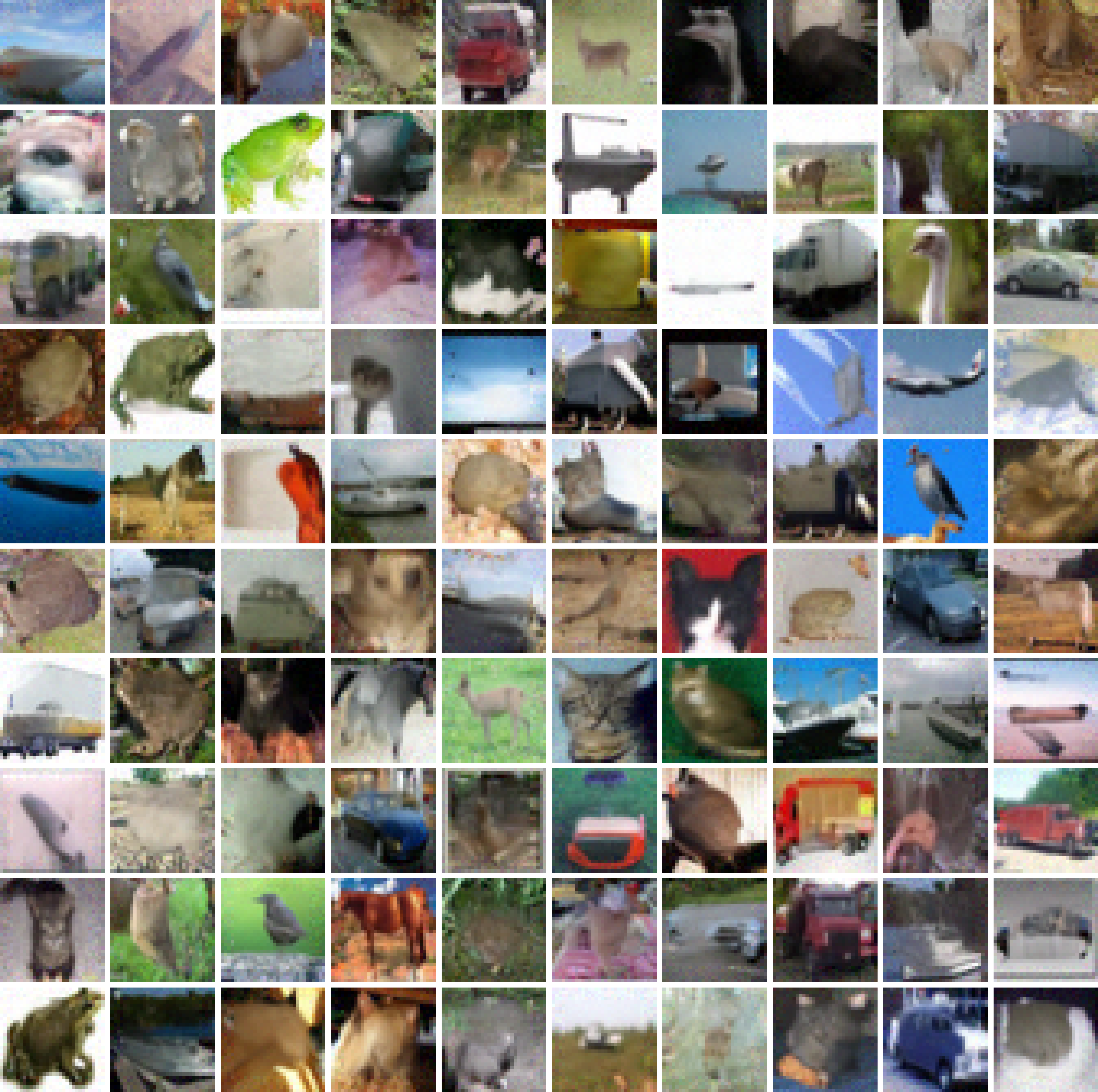}
        \caption{DAPS}
        \label{fig:originp}
    \end{subfigure}
    \hfill
    \begin{subfigure}[t]{0.48\textwidth}
        \centering
        \includegraphics[width=\linewidth]{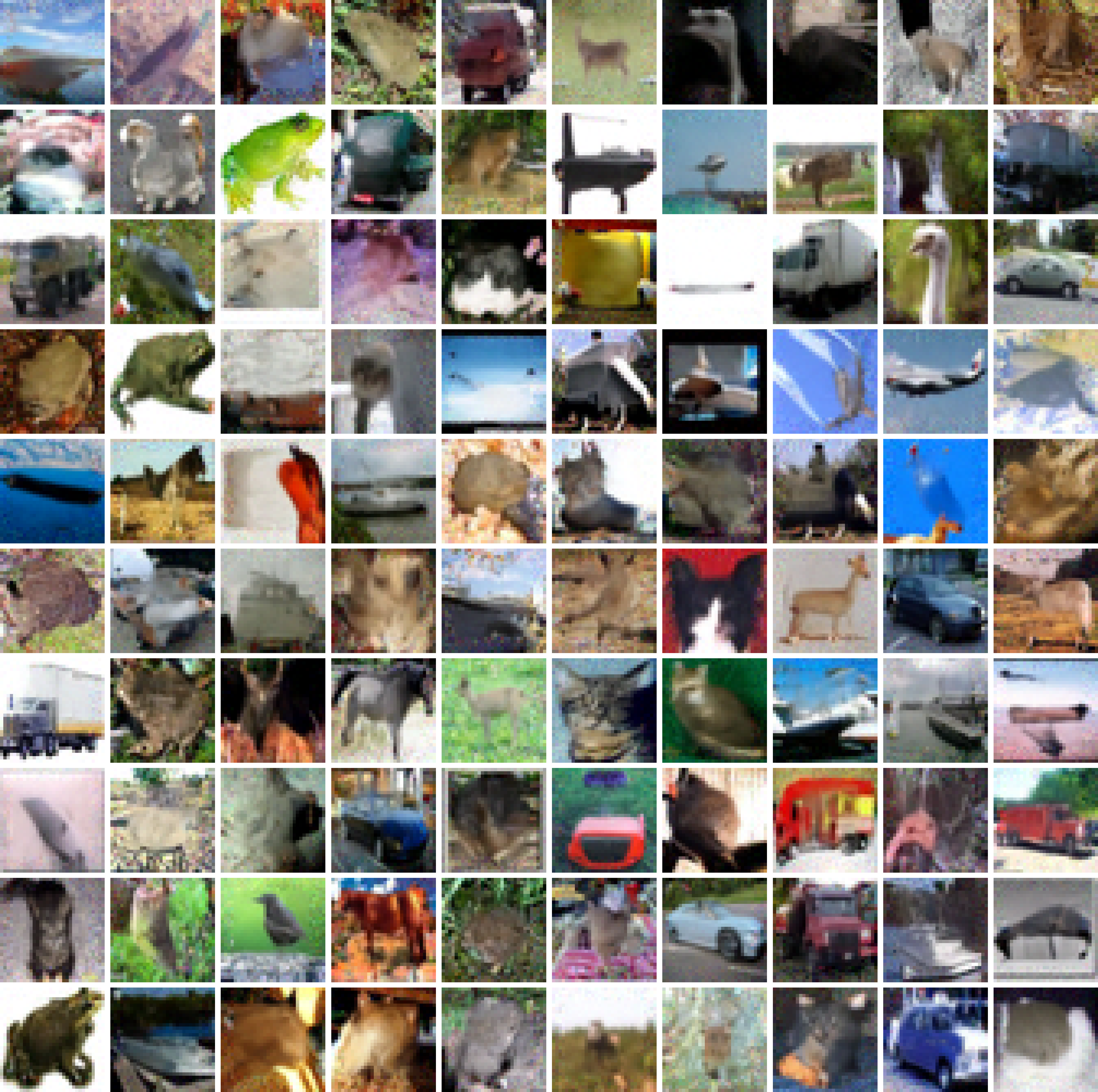}
        \caption{DAPS $+$ \textsc{ReGuidance}}
        \label{fig:reginp}
    \end{subfigure}
    \caption{Validation set generations for large inpainting on CIFAR-10.}
    \label{fig:cifar_1}
\end{figure}

\begin{figure}[H]
    \centering
    \begin{subfigure}[t]{0.48\textwidth}
        \centering
        \includegraphics[width=\linewidth]{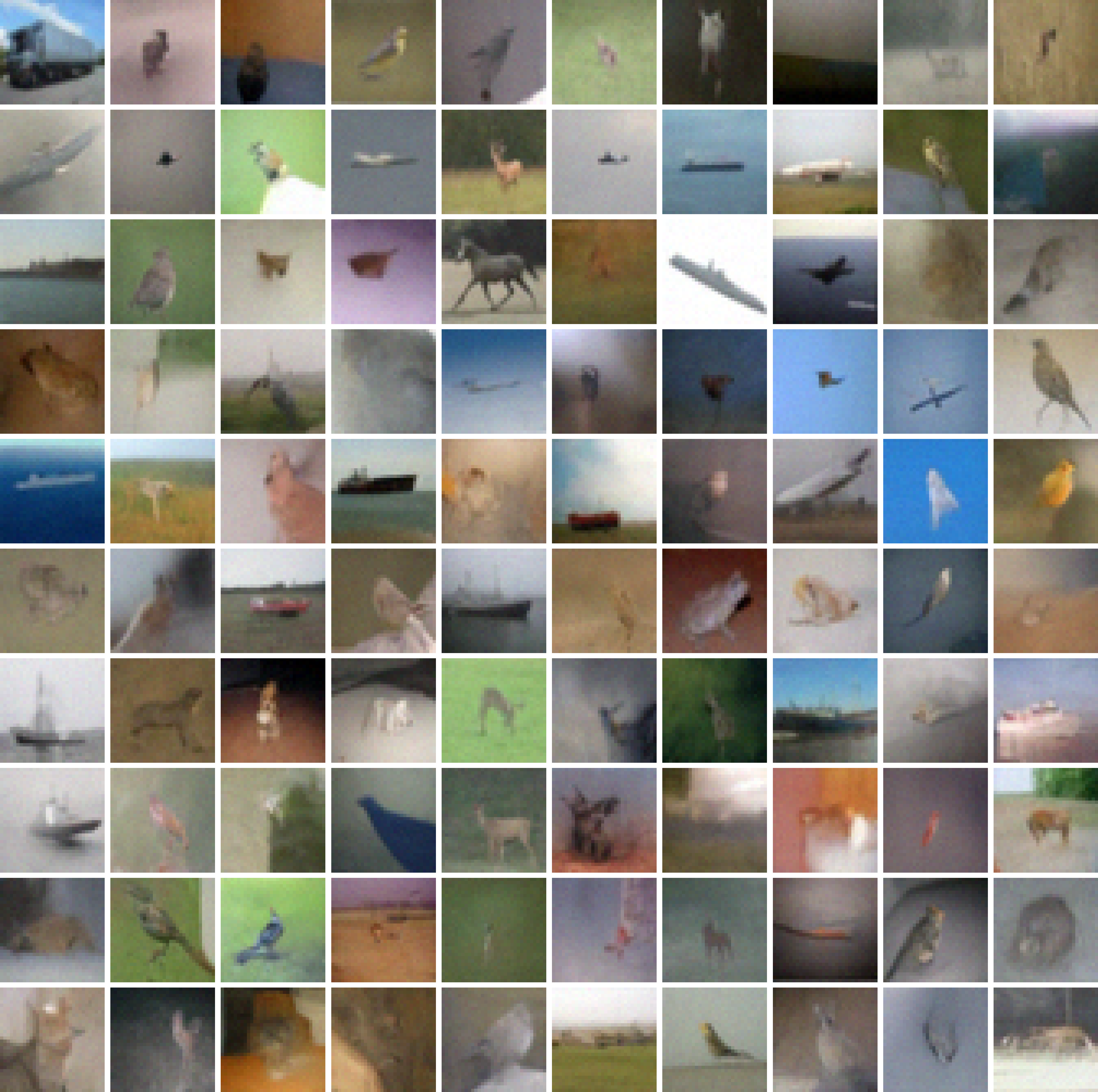}
        \caption{DAPS}
        \label{fig:origds}
    \end{subfigure}
    \hfill
    \begin{subfigure}[t]{0.48\textwidth}
        \centering
        \includegraphics[width=\linewidth]{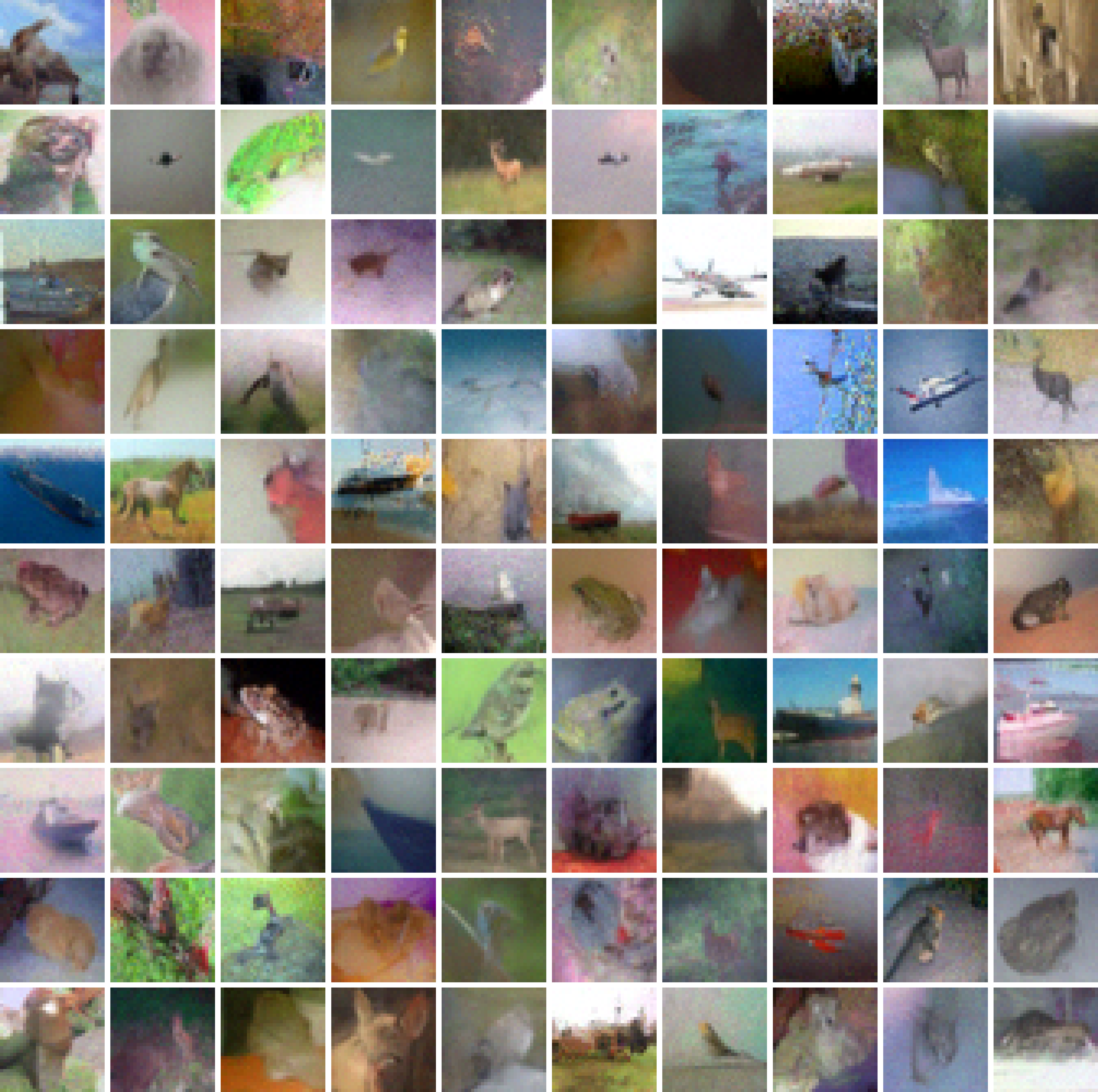}
        \caption{DAPS $+$ \textsc{ReGuidance}}
        \label{fig:regds}
    \end{subfigure}
    \caption{Validation set generations for large superresolution on CIFAR-10.}
    \label{fig:cifar_2}
\end{figure}

\subsection{Additional results on ImageNet}
\label{sec:imagenet_appendix}

\subsubsection{Superresolution}
\label{app:superresolution}

\begin{table}[ht]
    \centering
    \small                          
    \setlength{\tabcolsep}{6pt}     
    \renewcommand{\arraystretch}{1.08} 
    \begin{tabular}{l|cc|cc}
        \toprule
        \multirow{2}{*}{\textbf{Method}} &
        \multicolumn{2}{c|}{\textbf{Small Superresolution}} &
        \multicolumn{2}{c}{\textbf{Large Superresolution}} \\ 
        \cmidrule(lr){2-3} \cmidrule(lr){4-5}
        & LPIPS $\downarrow$ & CMMD $\downarrow$ 
        & LPIPS $\downarrow$ & CMMD $\downarrow$ \\ 
        \midrule
        DAPS                       & 0.410 & 1.257 & 0.545 & 2.114 \\
        DAPS + \reguidance         & 0.510 & 1.323 & {0.590} & \textbf{1.429} \\ 
        DDRM                       & 0.393 & 1.232 & 0.511 & 1.549 \\
        DDRM + \reguidance         & 0.405 & 1.435 & {0.530} & {1.971} \\
        DPS                        & 0.457 & 0.566 & 0.523 & 0.576 \\
        DPS + \reguidance          & {\textbf{0.433}} & {0.655} & \underline{\textbf{0.493}} & 0.705 \\
        Ground Truth + \reguidance & 0.314 & 0.813 & 0.222 & 0.407 \\
        \bottomrule
    \end{tabular}
    \vspace{0.75em}
    \captionsetup{font=small}
    \caption{Experimental results for superresolution. Bold numbers show improvements over baseline, and underlined values mark the best result (not including last row which uses knowledge of ground truth).}
    \label{tab:superresolution_metrics}
\end{table}

\begin{wrapfigure}{R}{0.75\textwidth}
  \centering
  
    \includegraphics[width=\linewidth]{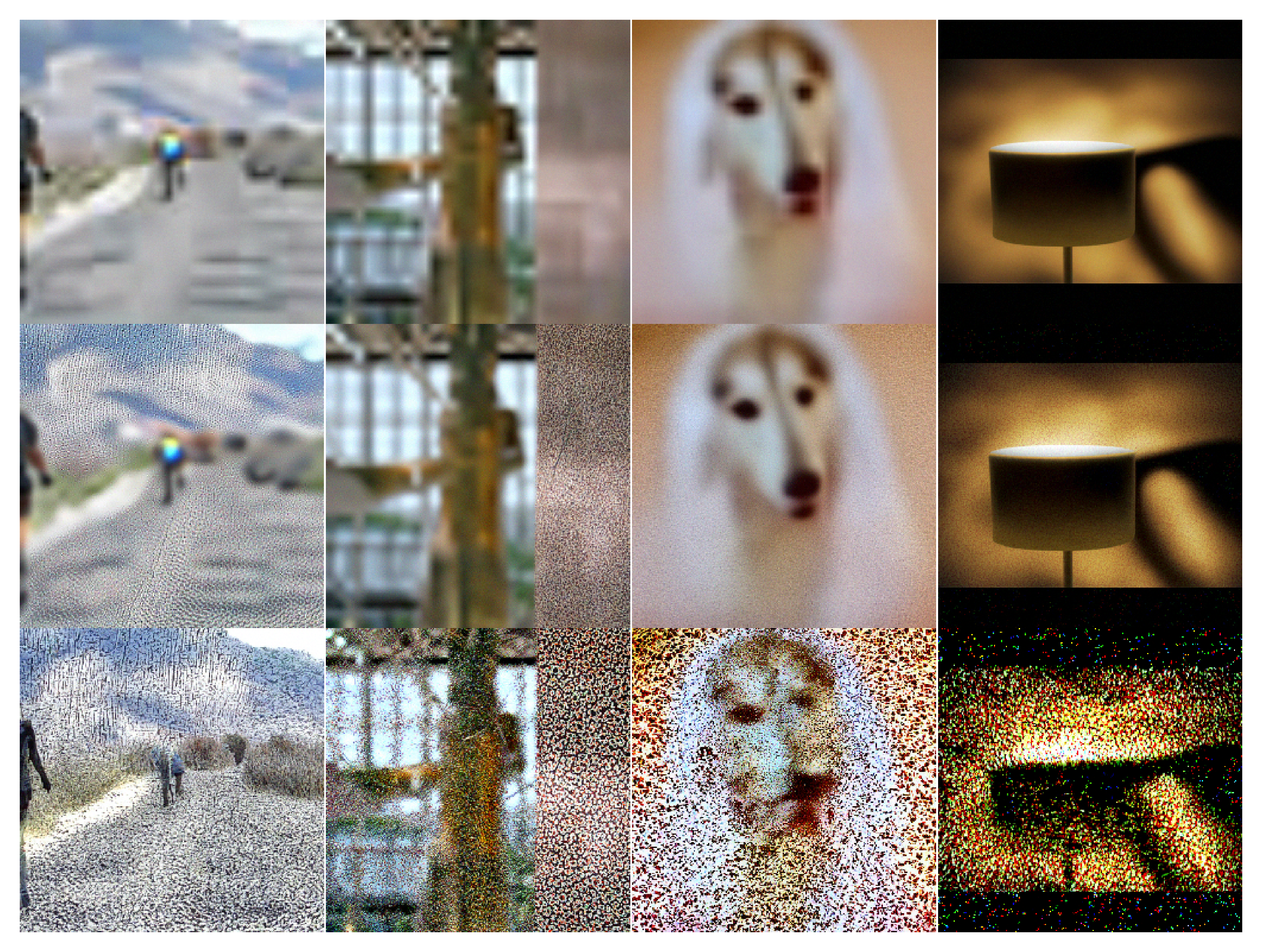}
    \captionsetup{font=small}  \caption{First row: candidate solution generated by DAPS; second row: image generated by running unconditional ODE from inverted ground truth latent; third row: image generated by \textsc{ReGuidance} with this latent. Second row does not quite give the identity map, and artifacts introduced by discretization error are magnified with \textsc{ReGuidance}. First two columns: $8\times$ superresolution; last two: $16 \times$.} 
  \label{fig:grainy}
\end{wrapfigure}

From Table \ref{tab:superresolution_metrics}, we note that \textsc{ReGuidance} helps boost reward in realism in some cases, while leaving the other metrics relatively unchanged / slightly worse. We note that performance using the ground truth latents of the original images have considerably better performance in reward and realism, suggesting that the candidate latents offered by the baselines might not be strong initializations (as refl-ected in the requirements of Theorems \ref{thm:informal_projection} and \ref{thm:informal_contract}). Indeed, we observe that this could be due to both the baseline samples as well as due to discret-ization error in the inver-sion process using the reverse probability flow ODE. We show evidence of the latter in Figure \ref{fig:grainy}, where running the deterministic diffusion ODE after inverting a candidate image does not quite return to said image, introducing noisy artifacts that are magnified by \textsc{ReGuidance}. These artifacts contribute to deteriorated performance (especially in the $8\times$ superresolution regime). In future work, we will explore stronger techniques that potentially avoid these artifacts for deterministic inversion.

\subsubsection{More visual examples}

\begin{figure}[H]
  \centering
  
    \includegraphics[width=0.781\linewidth]{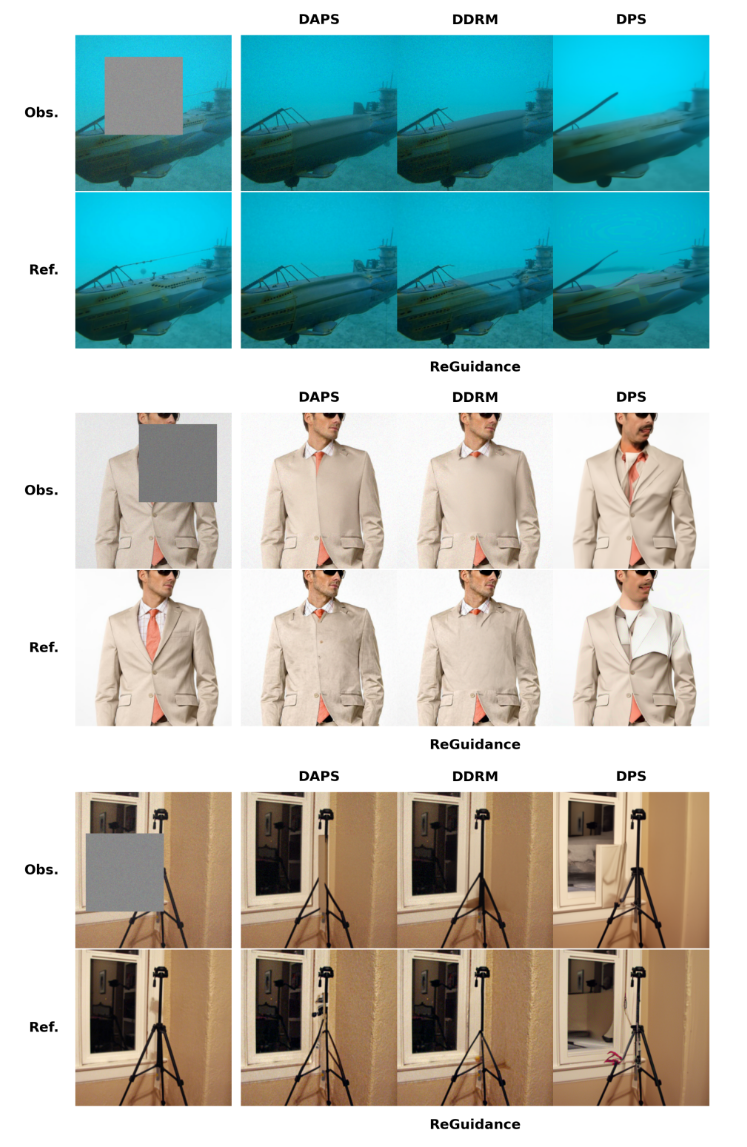}
    \captionsetup{font=small}   \caption{\small Small box-inpainting.}
  \label{fig:small-box-inpainting}
\end{figure}

\begin{figure}[H]
  \centering
  
    \includegraphics[width=0.781\linewidth]{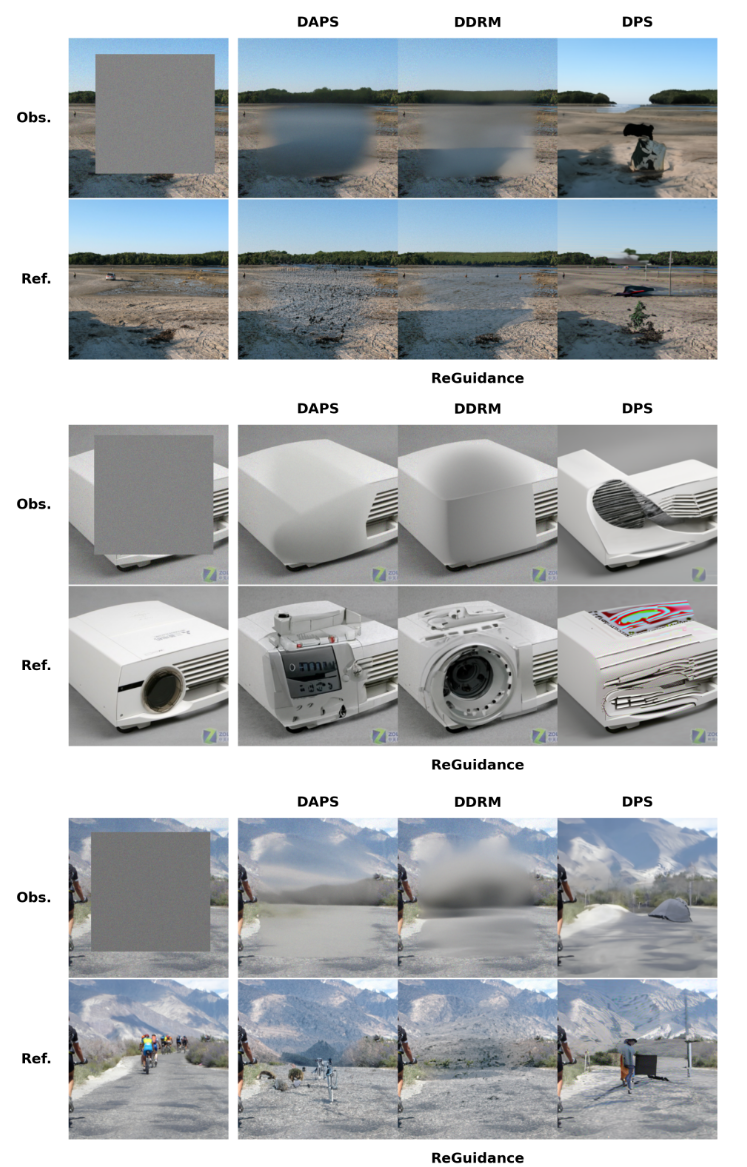}
    \captionsetup{font=small}   \caption{\small Large box-inpainting.}
  \label{fig:large-box-inpainting}
\end{figure}

\begin{figure}[H]
  \centering
  
    \includegraphics[width=0.781\linewidth]{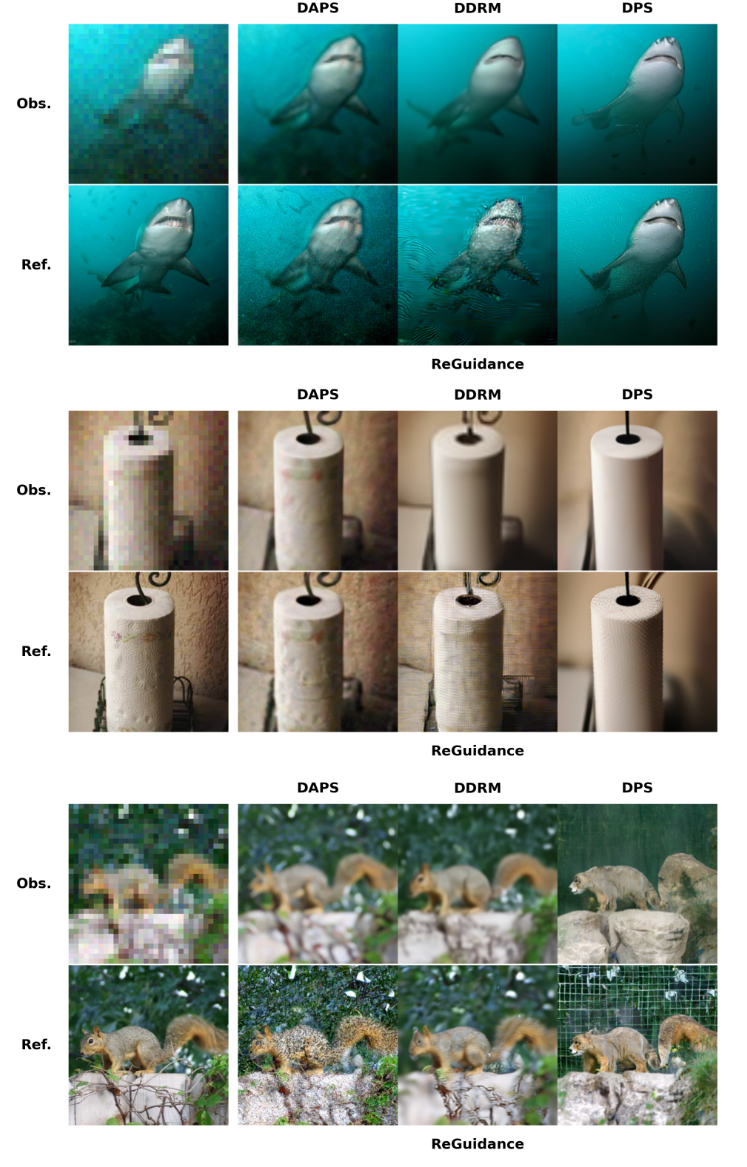}
    \captionsetup{font=small}   \caption{\small Small superresolution.}
  \label{fig:small-superresolution}
\end{figure}

\begin{figure}[H]
  \centering
  
    \includegraphics[width=0.781\linewidth]{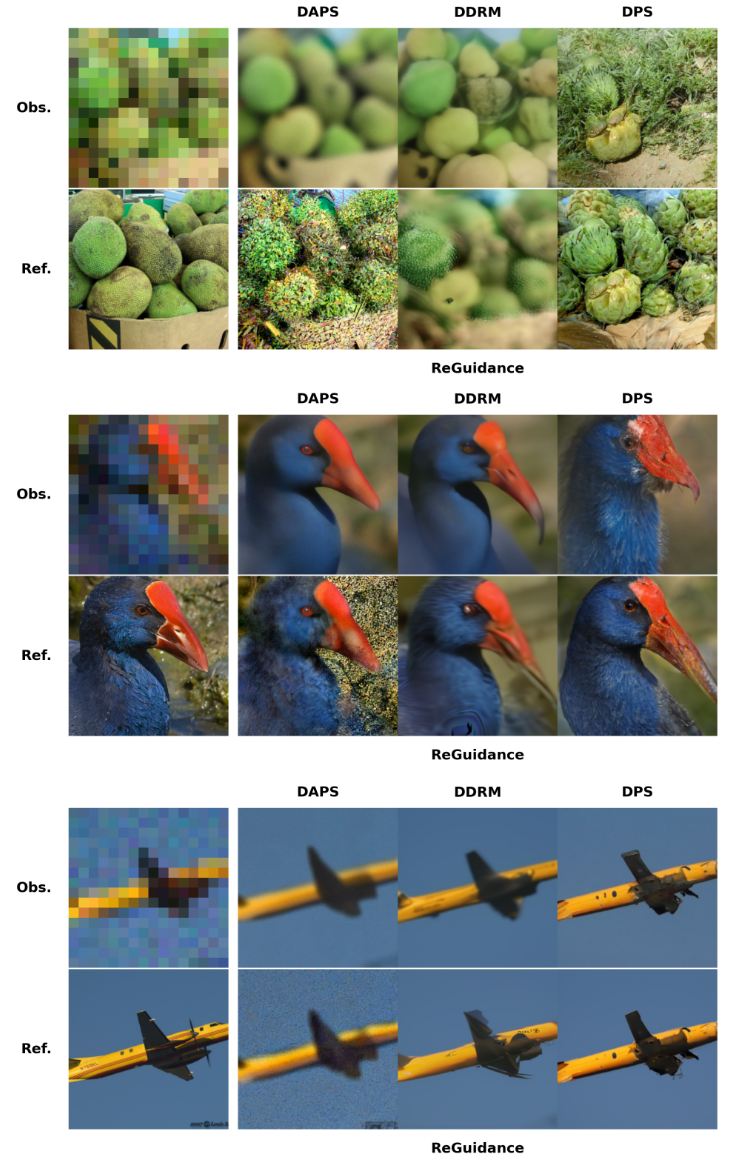}
    \captionsetup{font=small}   \caption{\small Large superresolution.}
  \label{fig:large-superresolution}
\end{figure}

\end{document}